\title{PINN Balls: Scaling Second-Order Methods for PINNs with Domain Decomposition and Adaptive Sampling}
\author{%
  Andrea Bonfanti\\
  BMW AG, Digital Campus Munich\\
  Basque Center for Applied Mathematics\\
  University of the Basque Country\\
  \texttt{abonfanti001@ikasle.ehu.eus}
  \And
  Ismael Medina \\
  University of Göttingen,\\
  Campus Institute Data Science\\
  \And
  Roman List \\
  BMW AG, Digital Campus Munich\\
  \And
  Björn Staeves \\
  BMW AG, Digital Campus Munich\\
  \And
  Roberto Santana \\
  University of the Basque Country,\\ Intelligent Systems Group\\
  \texttt{roberto.santana@ehu.eus}
  \And
  Marco Ellero \\
  Basque Center for Applied Mathematics, CFD Group\\
  IKERBASQUE, Basque Foundation for Science\\
  Swansea University, Complex Fluids Research Group\\
  \texttt{mellero@bcamath.org}
}
\begin{document}

\maketitle

\begin{abstract}
Recent advances in Scientific Machine Learning have shown that second-order methods can enhance the training of Physics-Informed Neural Networks (PINNs), making them a suitable alternative to traditional numerical methods for Partial Differential Equations (PDEs). However, second-order methods induce large memory requirements, making them scale poorly with the model size. In this paper, we define a local Mixture of Experts (MoE) combining the parameter-efficiency of ensemble models and sparse coding to enable the use of second-order training. 
Our model -- \textsc{PINN Balls} -- also features a fully learnable domain decomposition structure, achieved through the use of Adversarial Adaptive Sampling (AAS), which adapts the DD to the PDE and its domain. \textsc{PINN Balls} achieves better accuracy than the state-of-the-art in scientific machine learning, while maintaining invaluable scalability properties and drawing from a sound theoretical background.
\end{abstract}

\section{Introduction}

Obtaining accurate solutions to partial differential equations (PDEs) is a ubiquitous problem in countless engineering applications, as it enables reliable simulations of real-world scenarios. 
While discretization-based methods for solving PDEs offer convergence guarantees as well as a solid theoretical underpinning, their implementation and runtime can be resource intensive. 
Novel techniques developed within the domain of Scientific Machine Learning (SciML) strike a highly sought-after balance between accuracy and runtime, which has led to a rapid growth of the field in recent years \cite{cuomo2022scientific}. A milestone SciML architecture is the Physics-Informed Neural Network (PINN), which was theorized in the 90s \cite{veryfirstPINN} and gained the interest of the research community following the publication that coined its name \cite{raissi2019physics}. The salient aspect of PINNs is the inclusion of physical laws in the loss function, computed via automatic differentiation. While their training process is not simple, researchers proposed alternatives to fix pertinent issues of the PINN architecture, such as the complex loss landscape generated by the PDE residuals \cite{wang2024respecting}, unbalanced optimization of the components of the loss functions \cite{2021NSFnets}, and the spectral bias intrinsic to neural networks \cite{sallam2023rffpinncfd}. The combination of those with second-order optimizers has proven to be an invaluable approach to surpass the limitations of PINNs \cite{bonfanti2024challenges, muller2023achieving}. Second-order methods are a crucial step to enable highly accurate solutions, which can be a strict requirement in some applications. However, these methods scale poorly with an increasing number of parameters.

Scaling to large number of parameters is a crucial component for PINN architectures as the majority of problems simulated in industrial scenarios typically feature highly complex solutions and arbitrarily large domains. The majority of approaches in the literature rely on domain decomposition (DD), which is a common approach adopted by traditional PDE solvers. DD divides the task of predicting the PDE solution into different subdomains, which are routed to different submodels. In the machine learning community, architectures that combine multiple models are regarded as ensemble models. A widely known example of an ensemble model is the Mixture of Expert (MoE) \cite{masoudnia2014mixture}, which approximates a function as a linear combination of submodels (or experts) driven by a -- possibly learnable -- weighting factor, which is referred to as a “gating mechanism”. This allows the MoE to scale well to larger problems and often to parallelize the training of the submodels \cite{shukla2021parallel}, strongly limiting the computational overhead when several models are combined \cite{outrageouslyNN}. While a natural choice for scalability in the field of PDEs, the use of DD involves several challenges. A major issue appears when equations contain non-local quantities -- eg. the pressure in incompressible Navier-Stokes equations -- that cannot be easily inferred from local information. This approach is therefore typically complemented by the addition of a loss component at the interface of neighboring models. However, these interface conditions can strongly hinder the training of the model and force the predicted solution to oscillate between local minima. Furthermore, there are several ways of defining interface conditions for ensemble models; the choice is typically problem-dependent and can not be defined a priori \cite{li2022metainterfacecondition}.

An additional complexity introduced by DD is intrinsic to the decomposition itself. Since real-world scenarios typically include multi-scale and finely detailed domains, it is imperative to be able to adapt the decomposition to the underlying domain and PDE. Indeed, problems in the field of fluid dynamics are typically inhomogeneous and they require adaptive meshes for both stability and scalability. Therefore, the DD can not always be decided a priori. It has to be learned alongside the PDE solution, and possibly be adapted both to the geometry of the problem and to the nature of the PDE that is being solved. However, most of the literature works combining DD and PINNs do not instantiate such a domain decomposition. In most cases, the DD is not trainable but rather fixed beforehand. To address this issue we adopt the Adversarial Adaptive Sampling framework (AAS) proposed in \cite{tangad2024AAS}, which provides a solid theoretical framework for dynamic sampling that supports the training process and guides the DD.

\paragraph{Related Work} The synergy between PINNs and DD has already been explored for example by Extended-PINNs (XPINN) \cite{xpinns}, which relies on non-overlapping subdomains and interface conditions to match the prediction of neighboring submodels. Another well-known example is the Finite-Basis PINN (FBPINN) \cite{moseley2021finite}, which adopts predefined overlapping sub-domains and leverages the sparsity of the model's output to compute parameter updates without relying on interface conditions. This model has been extended to Extreme Learning Machines \cite{anderson2024fbelm} with the goal of achieving highly accurate solutions. A few additional works also propose to include a trainable domain decomposition, such as the Augmented PINN \cite{AugmentedPINN}, which represents an extension of the XPINN to a soft pre-trained domain decomposition. Similarly, the MoE PINN \cite{moePINN}, which takes advantage of the ensemble structure and the agreement of the submodels as a natural approach for uncertainty quantification. We refer to \cite{MLandDDreview} for a non-exhaustive overview of existing methods.

\paragraph{Motivation and Goal} In our research, we propose to combine the benefits of DD, second-order training methods and Adversarial Adaptive Sampling (AAS). Indeed, the paradigm of ensemble models for DD synergizes with the use of second-order methods: while the resource efficiency of second-order methods scales poorly with the number of parameters, ensemble models are parameter-efficient and their structure allows for parallel evaluation of the model and its gradients \cite{moseley2021finite}. Moreover, we propose to base the DD on overlapping radial basis functions to emphasize locality, hence restricting the prediction of each submodel to a ball within the PDE domain. This approach ensures that the Jacobian matrix required for computing the update step of the second-order method is sparse, relieving the memory-intensive nature of second-order optimizers as Natural Gradient Descents \cite{muller2023achieving} and Levenberg-Marquardt \cite{bonfanti2024challenges}. Last but not least, we leverage the theoretical backbone of AAS \cite{tangad2024AAS} to train the DD alongside the ensemble model, which requires no a priori knowledge of the features of the PDE solution or the computational domain. This allows the DD to direct its attention to PDE features that are harder to compute, enhancing the accuracy and robustness of the method.

\begin{table}
  \caption{Difference in features between \textsc{PINN Balls}, the APINN \cite{AugmentedPINN} and the FBPINN \cite{moseley2021finite}.}
  \label{tab:feature_comparison}
  \centering
  \begin{tabular}{lccc}
    \toprule
    & APINN & FBPINN & \textsc{PINN Balls} \\
    \midrule
    Sparse &  & \ding{51} & \ding{51} \\
    No interface loss &  & \ding{51} & \ding{51} \\
    Trainable DD & \ding{51} & & \ding{51} \\
    Adaptive Sampling &  & & \ding{51}  \\
    Second-order training &  & & \ding{51}\\
    \bottomrule
  \end{tabular}
\end{table}
The paper is structured as follows: Section \ref{sec:PINN_and_DD} introduces the notation for PINNs and DD. Section \ref{sec:AAS} introduces AAS, showcasing our theoretical advancements in \ref{sec:convergence-result}. The complete \textsc{PINN Balls} model is presented in Section \ref{sec:PB}, and its scalability with respect to the number of parameters of the models is discussed in \ref{sec:scale_LM}. The full training pipeline is then highlighted in Section \ref{sec:training}, and its performance is evaluated in Section \ref{sec:numerics}. Limitation are discussed in Section \ref{sec:limitations} and the paper is concluded with Section \ref{sec:conclusions}. Our contributions are gathered in Table  \ref{tab:feature_comparison}. To summarize: 
\begin{itemize}
    \item To the best of our knowledge, we are the first to combine second-order training methods with domain decomposition on PINNs, including scaling to large parameters;
    \item We adopt AAS to introduce a highly adaptable framework to train the domain decomposition;
    \item We advance the mathematical foundation of AAS, ensuring robustness of the model.
    
\end{itemize}
Our numerical results show outstanding performance on benchmark PDEs, demonstrating a consistent behavior of the parameter efficiency and global accuracy with respect to the size of the ensemble model. This also holds for cases such as the Navier-Stokes equations, which involves particular challenges for DD due to the aforementioned non-local features.
\section{Physics-Informed Neural Networks and Domain Decomposition}\label{sec:PINN_and_DD}
The PINN architecture leverages the power of neural networks as functional approximators to address classical PDE solution instances on a bounded domain $\Omega \subset \mathbb{R}^{d_{\text{in}}}$. Without loss of generality, consider the problem of finding a function $u: \Omega \to \mathbb{R}^{d_{\mathrm{out}}}$ which satisfies the following equations:
\begin{equation}
\label{eq:pde}
\begin{split}
  \mathcal{R}u(x) &= f(x), \quad x\in\Omega,  \\
    u(x) & =g(x),\quad x\in\partial\Omega,
\end{split}
\end{equation}
where the underlying PDE is fully defined by the differential operator $\mathcal{R}$, and boundary and initial conditions are collected in a known function $g$. PINNs aim to approximate the solution of the aforementioned PDE through a neural network $u_\theta:~\Omega \to \mathbb{R}^{d_{\mathrm{out}}}$ with $L$ layers, defined as:
\begin{equation}\label{eq:network}
    u(x; \theta):=W_{L-1}\cdot \sigma(  \cdots\sigma(W_0 x+b_0) +\cdots) + b_{L-1}.
\end{equation}
where $W_i \in \mathbb{R}^{h_i\times h_{i-1}}$ and $b_i\in\mathbb{R}^{h_i}$ denote respectively the weights matrix and bias vector of the $i$-th hidden layer with dimension $h_i$. For the sake of compactness, $\theta$ represents the collection of all the trainable parameters of the network, i.e. $\theta = \{W_i,b_i\}_{i=0}^{L-1}$. The activation function $\sigma: \mathbb{R} \to \mathbb{R}$ is a smooth coordinate-wise function, such as the hyperbolic tangent, or the sine function which are common choices for PINNs. The ``Physics-Informed'' nature of the neural network $u_\theta$ lies in the choice of the loss function chosen for minimization, which is typically given by
\begin{equation}\label{eq:loss_func}
    \mathcal{L}(\theta) := \int_\Omega |\mathcal{R}u(x ; \theta) -f(x)|^2 dx+  \int_{\partial \Omega} |u(x ; \theta) - g(x)|^2 d\omega(x) .
\end{equation}
In the above formulation, $dx$ and $d\omega$  represent infinitesimal volume and surface element respectively.
The integrals in \eqref{eq:loss_func} are typically approximated as:
\begin{equation}\label{eq:loss}
    L(\theta) := \frac{1}{|N_\Omega|}
    \sum_{i=1}^{N_{\Omega}} |\mathcal{R}u(x_i ; \theta) -f(x_i)|^2 + \frac{1}{|N_{\partial \Omega}|} \sum_{k=1}^{N_{\partial \Omega}} |u(\hat{x}_k ; \theta) - g(\hat{x}_k)|^2 ,
\end{equation}
where $N_\Omega$ represents the number of collocation points $\{x_i\}_{i=1}^{N_\Omega}$ in $\Omega$ to minimize the PDE residuals and $N_{\partial \Omega}$ the number of points $\{\hat{x}_k\}_{k=1}^{N_{\partial\Omega}}$ used to fit the initial and/or boundary conditions at $\partial\Omega$.

\subsection{Domain Decomposition}

The concept of domain decomposition is ubiquitous in the field of numerical solution of PDEs and it has been seamlessly used in order to tackle the scalability of PINNs. The rationale is to divide the PDE domain $\Omega$ into $M$ --- possibly overlapping --- subdomains $\{\Omega_j\}_{j=1}^M$, such that $\bigcup_{j=1}^M\Omega_j = \Omega $. Each subdomain can be associated with a local solution or, in the case of PINNs, with a local neural network $U_j(x):=U(x, \theta_j)$ parameterized by a subset of $\theta$ denoted by $\theta_j$. The aforementioned splitting allows parallelizable training routines, which are beneficial in terms of scalability \cite{outrageouslyNN} with respect to the number of parameters and domain size,  and in the effectiveness and accuracy of the model \cite{dolean2022FBPINNisSCHWARZ}.
Therefore, the overall model $u_\theta$ is obtained as a combination of the local models:
\begin{equation}\label{eq:pinnballs}
    u_\theta (x) = \sum_{j=1}^M \lambda_j(x) U_j(x) \quad \text{s.t.} \quad \sum_{j=1}^M\lambda_j(x) = 1, 
    \ \lambda_j(x) \ge 0
    \quad \forall j = 1,..., M, \quad \forall x \in \Omega.
\end{equation}
Here, $\lambda(x) := (\lambda_1(x), \dots, \lambda_M(x))$ may be interpreted as a vector of weighting factors indicating which submodel has more impact in the prediction of $u_\theta$ at $x$, and is commonly referred to as the \emph{gating function}. The latter property of $\lambda$ is typically enforced to impose a proper partition of unity.

The most popular works on domain decomposition in PINNs \cite{xpinns, kharazmi2021hp} also rely on the introduction of interface conditions to align the prediction of submodels between subdomains. Choosing the correct interface condition is a nontrivial, domain-dependent task \cite{li2022metainterfacecondition}. Typical interface conditions require to evaluate the difference between each pair of overlapping bases $U_j$ and $U_i$. While this step can be computed efficiently through parallelization, interface conditions require an additional loss component. As the nuanced interplay between loss components is known to introduce unfavorable complexity in the training of PINNs \cite{krishnapriyan2021characterizing}, we advocate for a global optimization step, which can still be computed efficiently due to the locality of each submodel \cite{moseley2021finite}.

\section{Adversarial Adaptive Sampling}\label{sec:AAS}
Adversarial Adaptive Sampling was introduced in \cite{tangad2024AAS} as a technique to enhance the training of PINNs.
Its core idea is to sample the collocation points from a probability density $p\in \prob(\Omega)$ to be learned simultaneously with the PINN, following the blueprint of generative adversarial neural networks. This corresponds to the minimax problem: 
\begin{equation}\label{eq:minimax_AAS}
\begin{aligned}
    &\min_\theta \max_{p_\phi\in \prob(\Omega)} \mathcal{J}(\theta, p_\phi), 
    \qquad 
    \text{ with  }
    \mathcal{J}(\theta,p_\phi) := \int_\Omega |\mathcal{R}u(x; \theta) -f(x)|^2 p_\phi (x)d x - \beta G(p_\phi),
\end{aligned}
\end{equation}
where $p_\phi$ is a parameterized version of $p$ and $G$ is a regularizing term that prevents the collapse of $p_\phi$ to a Dirac delta. 
The intuition is that $p_\phi$ should emphasize regions where the residuals are difficult to minimize, while ensuring full coverage of the PDE domain.
The founding paper \cite{tangad2024AAS} shows existence of solutions for the minimax problem \eqref{eq:minimax_AAS} with $G$ chosen to constrain the Lipschitz constant of $p_\phi$. Nevertheless, \cite{tangad2024AAS} employs in the numerical examples the more practical regularization $\int_\Omega |\nabla p_\phi(x) |^2 dx$, for which a convergence proof is not provided.
We bridge this gap in Section \ref{sec:convergence-result}.

\subsection{Convergence Result}
\label{sec:convergence-result} 
To lay down a sound theoretical framework, we extend the theoretical results in \cite{tangad2024AAS} to practical regularization functionals for the probability distribution $p$. 
In particular we complete the rigorous treatment of the regularization term $\int_\Omega |\nabla p(x) |^2 dx$ proposed by \cite{tangad2024AAS}, as well as an entropic regularization term motivated by the field of optimal transport \cite{Cuturi}:

\begin{theorem}
\label{thm:aas-improved}
    Assume one of the following settings:
    \begin{enumerate}
        \item \label{item:1} $G$ is the Dirichlet energy  $G_D(p) := \int_\Omega |\nabla p(x) |^2 dx$, and $\inf_\theta \|(\mathcal{R}u_\theta - f)^2\|_2 = 0$, or
        \item \label{item:2} $G$ is the negative entropy  $G_E(p) := \int_\Omega p(x) \log p(x)$, and $\inf_\theta \|(\mathcal{R}u_\theta - f)^2\|_{C(\Omega)} = 0$.
    \end{enumerate}
    Then the optimal value of the minimax problem \eqref{eq:minimax_AAS} is 0. In particular, for any sequence $\{\theta_n\}_n$ converging to the infimum assumed above, there exists a sequence of probability measures $\{ p_n\}_n$ (with $p_n$ maximizing $\mathcal{J}(\theta_n, \cdot)$ such that 
    \begin{equation}
        \label{eq:statement-convergence-lagrangian}
        \lim_{n\to\infty} \mathcal{J}(\theta_n, p_n) = 0.
    \end{equation}
    Moreover, the probability density functions $\{p_n\}_n$ converge to the uniform distribution on $\Omega$.
\end{theorem}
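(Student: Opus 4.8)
The plan is to write $r_\theta(x) := |\mathcal{R}u(x;\theta) - f(x)|^2 \ge 0$, so that $\mathcal{J}(\theta,p) = \int_\Omega r_\theta\,p\,dx - \beta G(p)$, and to handle the two regularizers separately after a common lower bound. Testing the inner maximization against the uniform density $\bar p := 1/|\Omega|$ gives $\sup_p\mathcal{J}(\theta,p) \ge \mathcal{J}(\theta,\bar p)$, which equals $\frac{1}{|\Omega|}\|r_\theta\|_1 \ge 0$ in case~\ref{item:1} (since $\nabla\bar p = 0$), and likewise $\ge 0$ in case~\ref{item:2} once $\Omega$ is normalized so that $|\Omega| = 1$ --- or, equivalently, once $G_E$ is read as the relative entropy with respect to $\bar p$. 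Hence $\inf_\theta\sup_p\mathcal{J}(\theta,p) \ge 0$, and it remains, given any sequence $\theta_n$ realizing the stated infimum, to produce maximizers $p_n$ of $\mathcal{J}(\theta_n,\cdot)$ with $\mathcal{J}(\theta_n,p_n)\to 0$ and $p_n\to\bar p$.

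For case~\ref{item:1} I would first obtain a maximizer $p_n$ of $p\mapsto\int r_{\theta_n}p - \beta\|\nabla p\|_2^2$ by the direct method: a maximizing sequence has bounded Dirichlet energy and unit mass, hence is bounded in $H^1(\Omega)$; Rellich--Kondrachov yields a subsequence converging strongly in $L^2$, so the constraints $p\ge0$, $\int p = 1$ and the linear term pass to the limit (using $r_{\theta_n}\in L^2$), and weak lower semicontinuity of $\|\nabla\cdot\|_2^2$ finishes the argument. The heart of the proof is then a self-improving estimate for $g_n := G_D(p_n)$: optimality of $p_n$ against $\bar p$ gives $\beta g_n \le \int r_{\theta_n}p_n \le \|r_{\theta_n}\|_2\|p_n\|_2$, while the Poincar\'e--Wirtinger inequality (the mean of $p_n$ is $1/|\Omega|$) gives $\|p_n\|_2 \le C_\Omega\sqrt{g_n} + |\Omega|^{-1/2}$; writing $\varepsilon_n := \|r_{\theta_n}\|_2\to 0$, these combine into $\beta g_n \le \varepsilon_n(C_\Omega\sqrt{g_n} + |\Omega|^{-1/2})$, a quadratic inequality in $\sqrt{g_n}$ that forces $g_n\to 0$. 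Then $0\le\int r_{\theta_n}p_n\le\varepsilon_n\|p_n\|_2\to 0$ and $\beta g_n\to 0$, so $\mathcal{J}(\theta_n,p_n)\to 0$; and $\|p_n-\bar p\|_2\le C_\Omega\sqrt{g_n}\to 0$, so $p_n\to\bar p$ in $L^2$, hence in $L^1$ and as probability measures.

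For case~\ref{item:2} the inner maximization is explicit: by the Gibbs variational principle (Donsker--Varadhan), the unique maximizer of $p\mapsto\int r_{\theta_n}p - \beta\int p\log p$ over $\prob(\Omega)$ is the Gibbs density $p_n = e^{r_{\theta_n}/\beta}/Z_n$ with $Z_n := \int_\Omega e^{r_{\theta_n}/\beta}\,dx < \infty$, and the attained value is $\mathcal{J}(\theta_n,p_n) = \beta\log Z_n$. Since $\|r_{\theta_n}\|_{C(\Omega)}\to 0$, we have $e^{r_{\theta_n}/\beta}\to 1$ uniformly, so $Z_n\to|\Omega|$ and $p_n\to 1/|\Omega| = \bar p$ uniformly, and $\mathcal{J}(\theta_n,p_n) = \beta\log Z_n\to\beta\log|\Omega|$, which is $0$ under the normalization above. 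This also explains why the $C(\Omega)$ norm, rather than $\|\cdot\|_2$, is needed here: the log-partition functional $r\mapsto\log\int_\Omega e^{r/\beta}\,dx$ is continuous for the uniform norm but not for $L^2$.

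The step I expect to be the main obstacle is the bootstrapping in case~\ref{item:1}: because only $\|r_{\theta_n}\|_2\to 0$ is available, $\int r_{\theta_n}p_n$ cannot be bounded directly, and one must couple the optimality of $p_n$ with the Poincar\'e--Wirtinger control of $\|p_n\|_2$ by $\sqrt{G_D(p_n)}$ to close a quadratic inequality in $\sqrt{G_D(p_n)}$. Secondary care is needed for the existence of maximizers and the underlying functional-analytic setting (bounded Lipschitz, connected $\Omega$; Rellich--Kondrachov; Poincar\'e--Wirtinger), and for matching the normalization of $\Omega$ so that the minimax value is exactly $0$ in the entropic case.
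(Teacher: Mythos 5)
Your proof is correct and follows essentially the same route as the paper's: bounding the inner maximum from below by the uniform density, establishing existence of maximizers via $H^1$ compactness, controlling the optimal $p_n$ with Poincar\'e--Wirtinger to force $L^2$ convergence to the uniform density in case~\ref{item:1}, and using the explicit Gibbs-density maximizer together with a $\log Z_n$ estimate in case~\ref{item:2}. The only cosmetic difference is in case~\ref{item:1}: the paper writes the optimality inequality as $\beta\|\nabla p_n\|_2^2 \le \int r^2(u_n)(p_n-1)$ and applies Cauchy--Schwarz plus Poincar\'e--Wirtinger directly to $p_n-1$, yielding a \emph{linear} bound $\beta\|\nabla p_n\|_2 \le C\|r^2(u_n)\|_2$, whereas you bound $\int r_{\theta_n}p_n$ without subtracting the mean and close a \emph{quadratic} inequality in $\sqrt{G_D(p_n)}$ --- both yield $G_D(p_n)\to 0$ and the same conclusions.
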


\begin{proof}[Idea of the proof]
The uniform density $\tilde{p} = 1/|\Omega|$ constitutes a lower bound for the nested maximization step in \eqref{eq:minimax_AAS}. This fact implies a certain coercivity of the maximization problem, which can be used to show existence of a maximizer $p^*$; combining this aspect with a minimizing sequence for the residuals completes the proof. The detailed proofs can be consulted in Sections \ref{sec:app-1} and \ref{sec:app-2};
\end{proof}

The entropic regularization $G_E$ proposed in Theorem \ref{thm:aas-improved} and the gradient-based loss $G_D$ of \cite{tangad2024AAS} share its aim to smooth out $p$. In addition, the entropic regularization $G_E$ features an infinite slope at the origin that inhibits the probability density from reaching zero anywhere within the domain. 
Hence, positivity of $p_\phi$ is baked into $G_E$, while it needs to be additionally imposed for $G_D$.

\begin{remark}
    As shown by \cite{jko, otto-porous-medium}, both the gradient flow of the Dirichlet energy with respect to the $L^2$-topology and the gradient flow of the entropy with respect to the $L^2$-Wasserstein distance generate the same dynamics: the heat equation. This suggests a compelling interpretation: a gradient ascent step in \eqref{eq:minimax_AAS} may be seen as the application of a heat kernel that smooths out $p$ with respect to the landscape of the residuals. For more context and implications of this intuition see Section \ref{sec:gradient-flow}.
\end{remark}  

\begin{remark}
The novelty of Theorem \ref{thm:aas-improved} is twofold: on the one hand we extend the convergence results of \cite{tangad2024AAS} to regularization functions that are more practical to implement, since the Lipschitz constraint considered in \cite{tangad2024AAS} is hard to implement in practice. On the other hand we show the convergence of $p_n$ to a uniform distribution along any converging subsequence, which in
particular implies that the sequence of maximizing distributions $\{p_n\}_n$ tends to cover the full domain as the residual of $u_n$ converges to zero. This result further validates the robustness of our method, ensuring an adequate generalization to arbitrarily complex PDE domains.
\end{remark}

\section{Sparse Ensemble of Physics-Informed Neural Networks}\label{sec:PB}
A crucial step in the proposed model is the usage of Radial Basis Functions (RBFs) to identify the domain decomposition. In particular, we opt for quartic basis functions as in Equation \ref{eq:dd_basis} -- instead of the more traditional Gaussian RBF -- to emphasize locality of the submodels.
\begin{equation}\label{eq:dd_basis}
    \lambda_j (x) = \frac{ \phi_j (x ) }{\sum_{l=1}^M \phi_l (x)} \qquad \text{where}\qquad\phi_j (x) =\begin{cases}  \left(1 - \frac{{|x - c_j|}^2}{s_j ^2}\right)^2 \quad &\text{if} \quad \frac{|x - c_j|}{s_j} \leq 1 \\
    0 \quad&\text{elsewhere}
    \end{cases}
\end{equation}
The centers and radii of each ball $\phi_j$ are indicated respectively as $c_j$ and $s_j$, and identifies the trainable parameters of the domain decomposition. We adopt the notation $\Omega_j$ to indicate the compact support of the basis function $\phi_j$, i.e. the set $\{x\in \Omega \quad \text{s.t.}\quad \phi_j (x) > 0\}$. The final gating function $\lambda_j$ is obtained by normalizing $\phi_j$ with respect to the total sum of RBFs. This ensures a proper partition of unity while maintaining the sets $\Omega_j$ compact and small. Ensuring that $\sum_{l=1}^M \phi_l (x) > 0 $ for all $x$ can be done by choosing sufficiently large $\{ s_j\}_{j=1}^M$ at initialization.

\paragraph{Connecting Domain Decomposition and Adversarial Adaptive Sampling}
In order to align the training of the parameters of each $\phi_j$ with the training of the PINN parameters, we devise a connection between the domain decomposition and the sampling distribution $p_\phi$. In particular, we opt to approximate $p_\phi$ through the combination of the basis $\phi_j$ as:
\begin{equation}\label{eq:p_alpha}
    p_\phi(x) = \frac{1}{M}\sum_{k=1}^M w_k \phi_k (x),
\end{equation}
where $w_k$ is chosen so that $w_k\phi_k$ is a probability distribution. This parametrization of $p_\phi$ enforces the domain decomposition to focus on areas where the PDE residuals are higher, which typically occurs in regions of the domain where the PDE solution presents steep gradients and/or high frequency components. Moreover, it implicitly defines a suitable training proceedure for the DD, given by the gradient ascent step of the minimax training routine.

\begin{definition}[\textsc{PINN Balls}]
We define \textsc{PINN Balls} a tuple of the form $(u_\theta, p_\phi)$, where $u_\theta$ is the ensemble model defined in \eqref{eq:pinnballs}, with a domain decomposition structure $\{\lambda_j\}_{j=1}^M$ given by \eqref{eq:dd_basis}, and $p_\phi$ is the probability distribution of the form \eqref{eq:p_alpha}.
The loss function is given by discretizing the AAS loss \eqref{eq:minimax_AAS} on a set of points $\{x_i^\phi\}$ sampled from $p_\phi$:

\begin{equation}\label{eq:minimax_AAS_discrete}
    J(\theta, \phi)
    := 
    \frac{1}{N_\Omega}
    \sum_{i=1}^{N_\Omega} |\mathcal{R}u(x_i^\phi ; \theta) -f(x_i^\phi)|^2 
    +\frac{1}{N_{\partial \Omega}}\sum_{k=1}^{N_{\partial\Omega}} |u(\hat{x}_k) - g(\hat{x}_k)|^2 - 
    \frac{\beta}{N_\Omega}\sum_{i=1}^{N_\Omega} \log p_\phi(x_i^\phi).
\end{equation}
\end{definition}

\begin{remark}
    Ensemble models of the form \ref{eq:pinnballs} are universal approximators. This holds trivially as each submodel is a universal approximator itself.
\end{remark}

\subsection{Scalable second-order training of the parameters}\label{sec:scale_LM}
To train our model we adopt the Levenberg Marquardt update rule, which is a second-order quasi-Newton method that represents a stabilized version of the Gauss-Newton method \cite{nocedal1999numerical}.
\begin{equation}\label{eq:gf_quasinewton}
    \theta^{k+1} = \theta^k  - (D_\theta L ^T D_\theta L + \eta I)^{\dagger}\nabla_\theta L(\theta^k).
\end{equation}
Here $D_\theta L$ denotes the Jacobian of $L$ with respect to the model parameters, and the superscripts $k$ and $\dagger$ represent respectively the training iteration and the Moore-Penrose pseudo-inverse. The scalar value $\eta$ is determined heuristically during training, balancing the interplay between a gradient descent step and a second-order update. The latter is necessary to efficiently reach a suitable minimum, whilst the former is useful in practice at early iterations.

The most prominent drawback of this training routine is given by the storage and inversion of the matrix $D_\theta L ^T D_\theta L$ in GPU memory, which is highly impractical as its size scales quadratically with the number of parameters of the model. A practical solution to this is to ensure that the aforementioned matrix can be stored in a sparse format, indicating only a linear growth with respect to the number of parameters. This can be done by leveraging the locality of the submodels of \textsc{PINN Balls}, since the effect of each $U_j$ is restricted to the compact support $\Omega_j$.

Indeed, the components of $D_\theta L$ are defined as the partial derivatives of $u_\theta$ with respect to the parameters $\theta$. Outside of the compact support $\Omega_j$, hence where $\lambda_j$ and all its derivatives are identically 0, the jacobian components of $U_j$ vanish. This implies that in the aforementioned region it is not necessary to backpropagate to obtain the partial derivatives necessary to compute the PDE residuals.
\begin{equation}
\label{eq:jac_sparse}
    D_\theta L =\begin{bmatrix} \partial_\theta \sum_{j=1}^M \lambda_jU_j \\ \partial_{\theta} \mathcal{R}\left[\sum_{j=1}^M \lambda_jU_j\right]\end{bmatrix}= \begin{bmatrix}  \sum_{j=1}^M \lambda_j \partial_\theta U_j \\ \partial_{\theta} \mathcal{R}\left[\sum_{j=1}^M \lambda_jU_j\right]\end{bmatrix}.
\end{equation}
On top of sparsity, the \textsc{PINN Balls} model also enables parallel implementations for the computation of $D_\theta L$. Sparse and parallel implementations combined allows to adopt the exact formulation of the LM update step by computing $D_\theta L^T D_\theta L$ in a fast and memory efficient fashion. The inversion remain the ultimate challenging aspect, which can be done through sparse solvers. 
Another valuable alternative is to consider matrix-free multiplication methods as conjugate gradient, whose implementation can benefit from the sparse nature of our Jacobian. This class of methods is however out of the scope of the present paper, since they are often sensitive to poor conditioning of the underlying matrix, which is often the case when training PINNs. Finding a viable preconditioning strategy  remains thus an interesting direction of future study.

\subsection{Min-max Training Routine}\label{sec:training}

The training of the \textsc{PINN Balls} model follows the alternating descent-ascent approach proposed for AAS in \cite{tangad2024AAS}. For the parameters $\theta$ of each submodel we adopt the Levenberg-Marquardt (LM) update step defined in \eqref{eq:gf_quasinewton}.
When the problem is relatively simple -- e.g.~linear PDEs -- one can also resort to a more efficient approach of alternating a gradient descent step on $\theta$ and a second-order fine tuning of the parameters of the last layer of the model, which is closely connected to the approach of Extreme Learning Machines \cite{huang2006extreme} and close to that employed in \cite{anderson2024fbelm}.

The training step for the domain decomposition $\phi$ consists in a gradient ascent step of \eqref{eq:minimax_AAS_discrete}. As proposed in \cite{tangad2024AAS}, it is convenient to rewrite \eqref{eq:minimax_AAS_discrete} using importance sampling:
\begin{equation}\label{eq:minimax_AAS_discrete_final}
F(p_\phi) := \frac{1}{N_\Omega}\sum_{i=1}^{N_\Omega} \frac{{|\mathcal{R}u(x_i^\phi ; \theta) -f(x_i^\phi)|}^2 p_\phi(x^\phi_i)}{p_{\bar{\phi}}(x^\phi_i)} - \frac{\beta}{N_\Omega}\sum_{i=1}^{N_\Omega}\frac{p_\phi(x_i^\phi) \log p_\phi(x_i^\phi)}{p_{\bar{\phi}}(x^\phi_i)}.
\end{equation}
Here $p_{\bar{\phi}}$ stands for the current value of $p_\phi$, which is taken to be a constant and therefore does not enter the computational graph. \eqref{eq:minimax_AAS_discrete_final} is evaluated on the same sample points as the $\theta$-update, and the boundary points are ignored, since they do not depend on $\phi$. 
The choice of the learning rate for the parameters of $p_\phi$ is important for stabilizing the overall training routine. In particular, we notice that the LM training shows better performance when $p_\phi$ does not strongly vary over training iterations. However, its fluctuations during training are important to ensure proper convergence of the model. Hence, we recommend an exponential decay for its learning rate to emphasize variability in early iterations and favor convergence of the model in late training stages.
The complete training pipeline is summarized in Figure \ref{fig:diagram}.

\begin{figure}[h]
    \centering
        \includegraphics[scale=0.225]{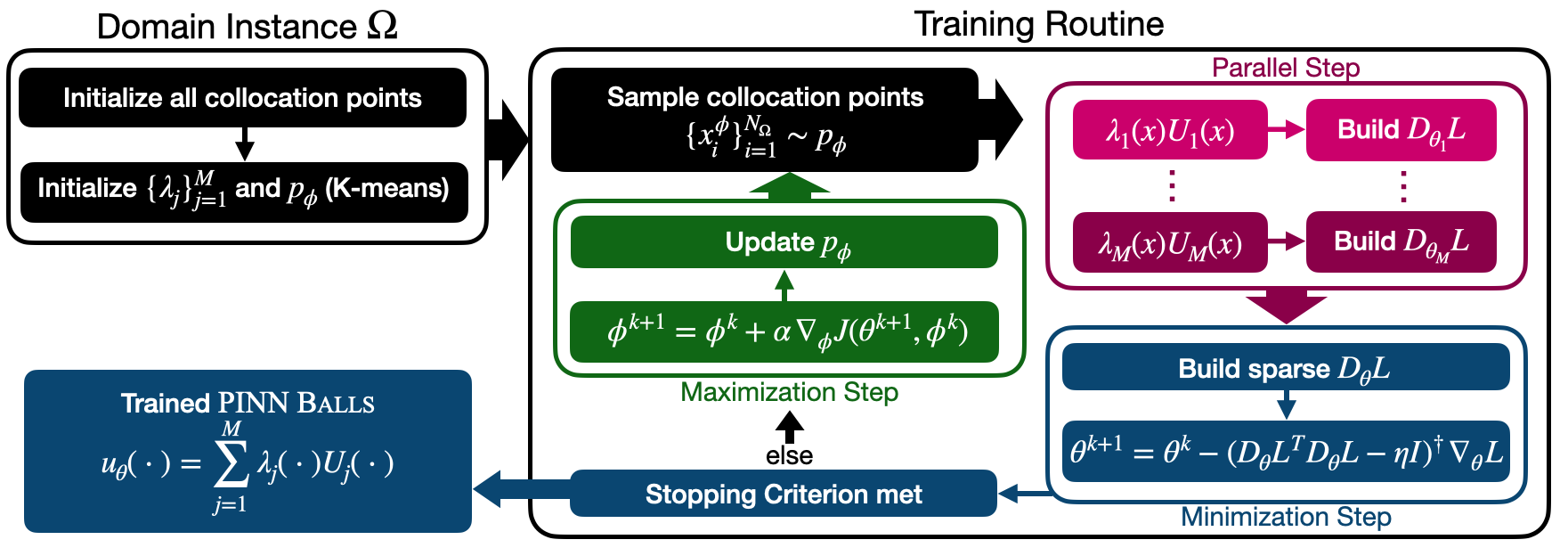}
    \caption{High-level diagram of the minimax training routine devised for the \textsc{PINN Balls} model.}\label{fig:diagram} 
\end{figure}

\section{Numerical Evaluation}\label{sec:numerics}
We begin by evaluating the behavior of the Levenberg–Marquardt (LM) update step on a simple supervised problem — approximating the function $f(x,y) = \sin{4 x}  \sin{y}$. For various number of parameters, we instantiate a standard dense neural network and \textsc{PINN Balls} architectures with 5, 10, 20, and 50 PINN Balls. The disposition of the centers $c_j$ is determined by K-means over the collocation points, and $s_j$ is selected to be the minimum possible to cover all training samples, which is identified by the standard deviation of each cluster. For each model, we perform a single LM update and collect statistics on runtime, Jacobian sparsity and memory consumption; results are summarized in Figure~\ref{fig:scaling}. In the case of \textsc{PINN Balls}, we exploit the sparsity of the Jacobian by constructing it in Compressed Sparse Row (CSR) format using the \texttt{scipy.sparse} package, solving the resulting system either in the \texttt{scipy} package \cite{scipy} via direct dense or sparse solvers depending on the structure. The statistics shown do not refer to the accuracy of the model nor include the computation of the matrix $D_\theta L^T D_\theta L$, respectively discussed in Sections \ref{sec:PDEs} and \ref{sec:limitations}.

\paragraph{Efficiency of the matrix inversion}Although dense solvers generally achieve faster wall-clock times for moderate problem sizes, models employing a large number of PINN Balls achieve competitive runtimes. Notably, highly sparse models (50 PINN Balls) present comparable runtimes even in moderate problem size, which is mainly due to numerical heuristic of sparse matrices: visible performance improvements are typically noticeable when the nonzero elements of the matrices are 10\% or less of the total.

\paragraph{Memory Efficiency}
From a memory perspective, using a large number of PINN Balls provides significant gains, which is to be expected due to the sparse nature of the Jacobians. However, the dense architecture is more memory efficient than the 5-PINN Balls model, whose Jacobian presents 30\% of nonzero elements. Indeed, in the sparse formulation the nonzero values are stored alongside their indices, which results in roughly 3 times the number of nonzero values in the matrix. Notably, for the simple supervised problem considered, the dense solver reaches an out-of-memory error at roughly 8000 parameters., while the sparse ones can handle much larger models.

\begin{figure}[h!]
    \includegraphics[scale=0.23]{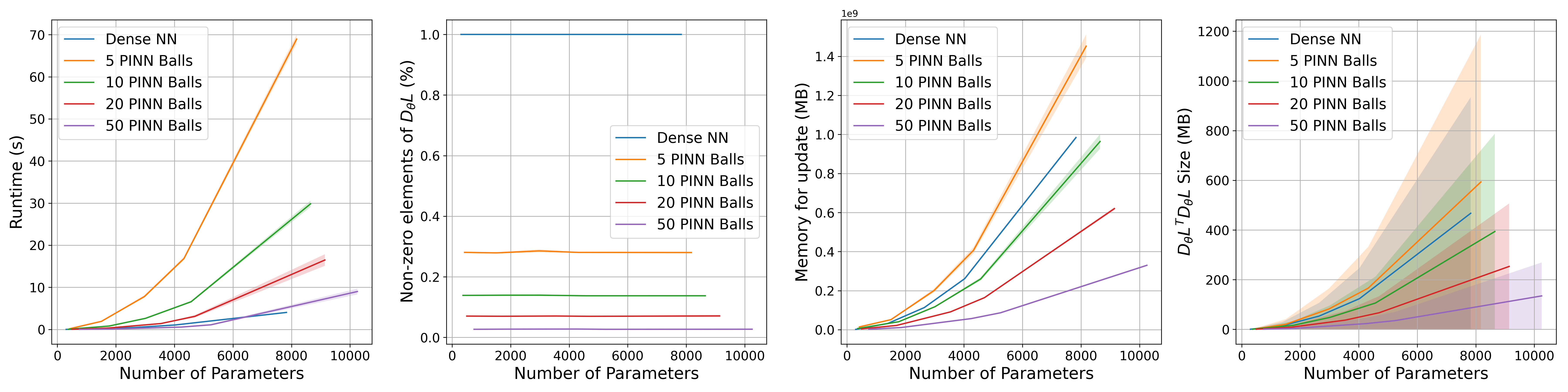}
    \caption{Mean value with shaded variance, for different ensemble sizes, of various statistics of the LM update step on \textsc{PINN Balls}. From left to right, the runtime in seconds, the percentage of non-zero elements of $D_\theta L$, and the CPU memory consumed to invert and store the matrix $D_\theta L^T D_\theta L$}\label{fig:scaling}
\end{figure}

To assess the robustness of these findings, we replicated the experiment across a range of target functions of varying complexity. Remarkably, the trends observed in Figure~\ref{fig:scaling} persist almost independently of the choice of $f$. While in theory the condition number of 
$D_\theta L$ could impact the difficulty of solving the LM update, we observe that the damping effect inherent to the LM formulation sufficiently regularizes the system, mitigating potential instabilities due to ill-conditioning. Note that PINNs trained with second-order methods show very strong performance even with few parameters when compared to larger models trained with first-order methods or LBFGS \cite{bonfanti2024challenges}. Notably, the sparsity structure of the Jacobian may be altered during training by the DD, due to changes in the basis radii. However, when preserving high sparsity is critical, the basis radius can be capped or even fixed after initialization.

\subsection{Training on Benchmark PDEs}\label{sec:PDEs}
We now showcase the results obtained by the \textsc{PINN Balls} model on three well-known test cases for PINNs: The Helmholtz Equation, Burgers' Equation and time-dependent, incompressible Navier Stokes Equations in 2D. We show the relative $L^2$ error achieved during training as the number of submodels increases, keeping the same architecture for each individual submodel. These results are collected in Figure \ref{fig:Helmholtz} and commented below. For the sake of brevity, we gather details of the PDEs solved and additional results in Appendix \ref{app:PDEs}. We do not include comparison with first order optimization. We refer to \cite{bonfanti2024challenges} for a thorough comparison between first and second order optimization in general. Combined with the DD settings, tendentially first-order optimization is slower, especially in a scenario like ours which includes an adversarial minimax formulation. All models are implemented in JAX \cite{jax} using double precision and trained on a single NVIDIA A10 GPU.

\paragraph{Helmholtz Equation}
The leftmost plot of Figure \ref{fig:Helmholtz} showcases the training loss achieved on the Helmholtz equation. Increasing the ensemble size also enhances the fluctuations in the loss function. This behavior is expected, as increasing the number of basis makes the maximization step in \eqref{eq:minimax_AAS_discrete_final} more accurate. However, the very small relative $L^2$ error achieved shows that the \textsc{PINN Balls} model is capable of obtaining extremely accurate solutions. Remarkably, the $L^2$ error with respect to the ground truth does not present high frequency components, which is a common artifact in PINN solutions. Moreover, the $L^2$ error in the order of $10^{-10}$, which is notably low for a PDE solver in general. The final distribution of the basis $\phi_j$ does not present any noteworthy pattern.

\paragraph{Burgers' Equation}
The center plot of Figure \ref{fig:Helmholtz} showcases the $L^2$ Error achieved during training on Burgers' equation. It is possible to notice that strong oscillation in accuracy occurs during early iterations, which is often due to the higher complexity that can be captured in the maximization step by $p_\phi$. This behavior proves to be beneficial for the model, as it allow to reach lower $L^2$ errors, up to the order of $10^{-4}$ for a model with 20 PINN Balls.

\paragraph{Navier-Stokes Equations}
Navier Stokes equations represent a truly challenging scenario for PINNs, as the traditional implementation fails to converge on the classical benchmark of a fluid flow past a cylinder in 2D \cite{pitfallsandfrustration}. The behavior of \textsc{PINN Balls} is shown in the rightmost plot of Figure \ref{fig:Helmholtz}. Once more, increasing the number of PINN Balls increases the accuracy of the model. This result is non-trivial for a DD-based approach, due to the strong nonlinearity of Navier Stokes' equations and the presence of the pressure, which is a nonlocal quantity. This favorable behavior is due to the global nature of the LM update, which has the scaling advantages of DD, while keeping the invaluable benefits of a second-order method, which is crucial for solving nonlinear PDEs with PINNs \cite{bonfanti2024challenges}. Notably, in our GPU implementation, a global LM update results in an out-of-memory error at approximately 2000 parameters, which are not sufficient to capture the time-dependency of the whole solution. Indeed, capturing phenomena as the vortex shedding in the cylinder wake requires a model with a consistent number of parameters.
\begin{figure}[h]
    \centering
    \begin{subfigure}[h]{0.31\textwidth}
        \includegraphics[scale=0.3]{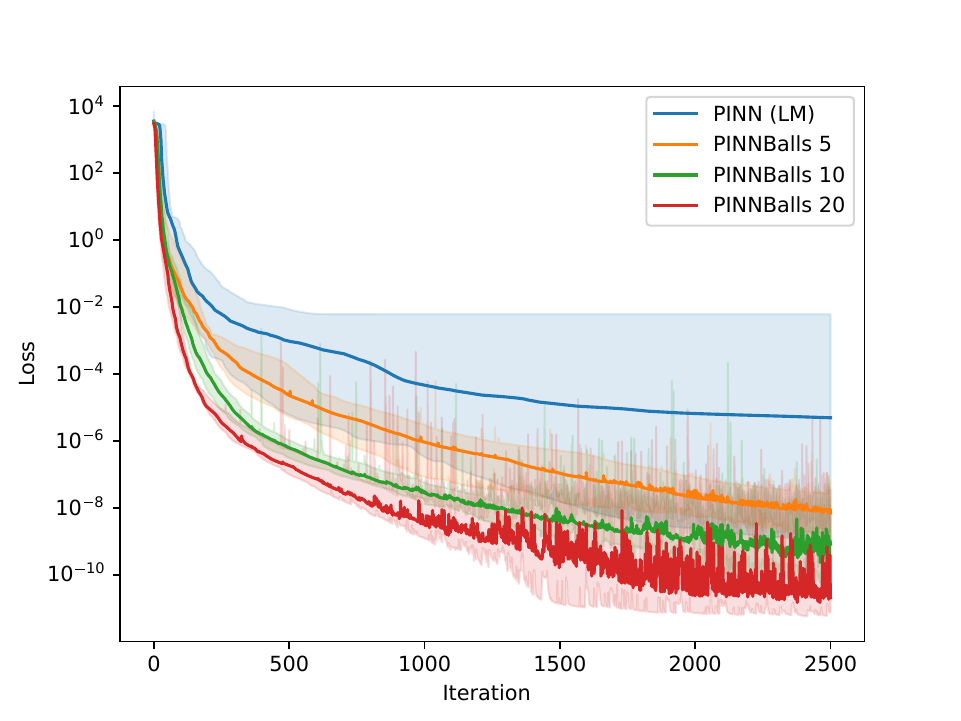}
    \end{subfigure}
    \begin{subfigure}[h]{0.31\textwidth}
        \includegraphics[scale=0.3]{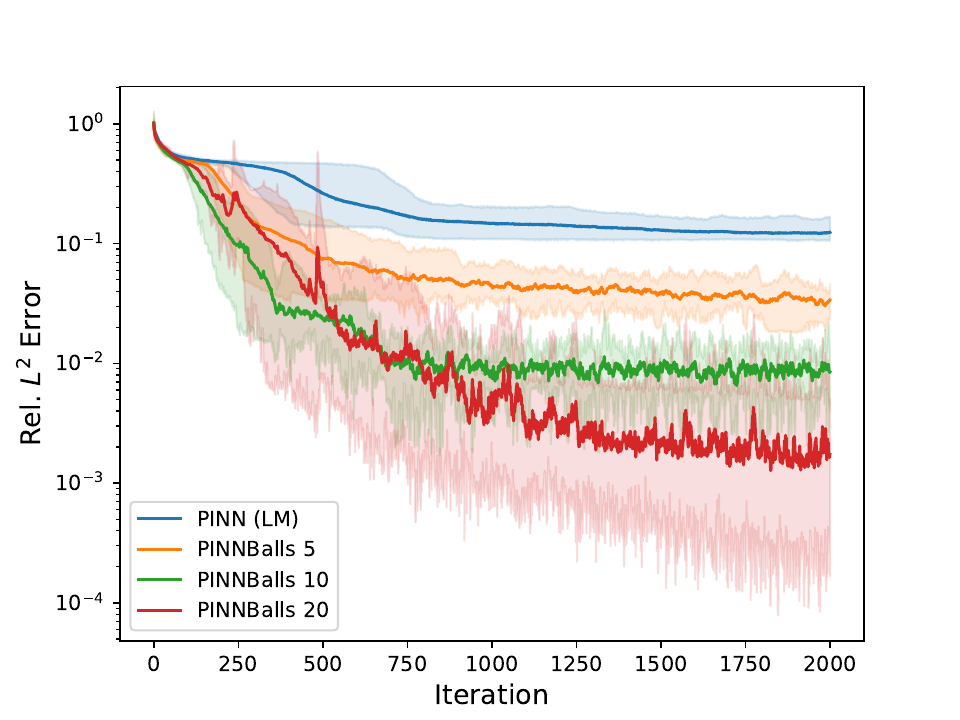}
    \end{subfigure}
    \begin{subfigure}[h]{0.31\textwidth}
        \includegraphics[scale=0.3]{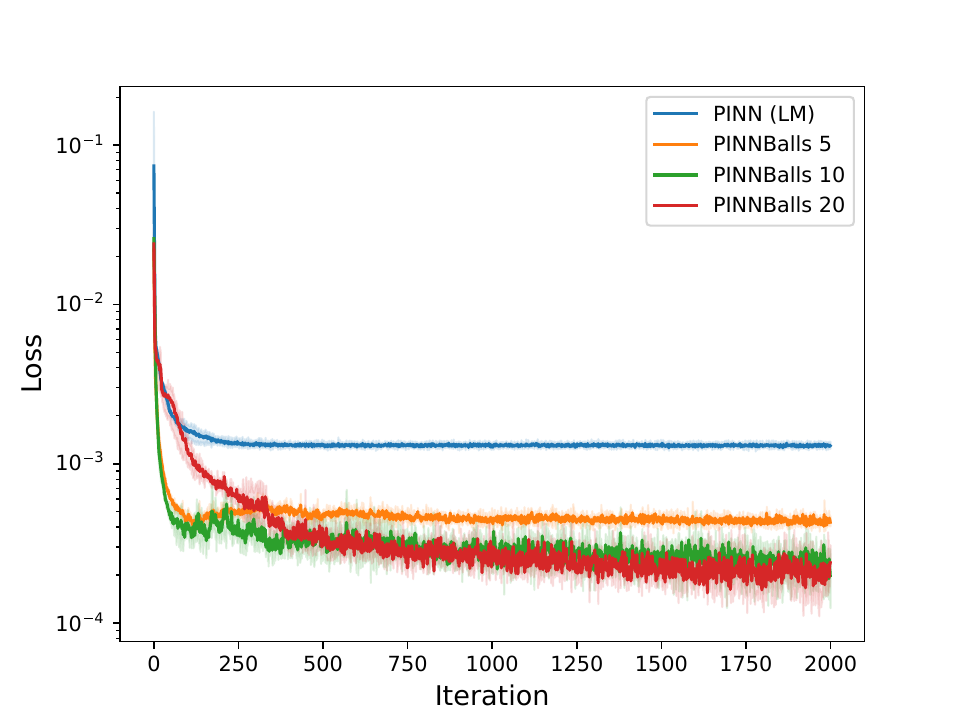}
    \end{subfigure}
    \caption{Median Loss value and minimax range over 10 independent trainings of \textsc{PINN Balls} with an increasing number of submodels on Helmholtz (left), Burgers (center) and Navier Stokes' (right) equations. Each submodel has 3 hidden layers and 10 neurons per layer.}\label{fig:Helmholtz} 
\end{figure}


\subsubsection{Behavior of the Domain Decomposition}
Hereinafter, we highlight the behavior of the DD under the chosen initialization and optimization pipeline. Under reasonable conditions, model performance is not highly sensitive to the initial number or placement of basis functions. However, careful initialization is essential for preserving the sparsity structure of the Jacobian $D_\theta L$. Excessively many or poorly placed basis functions can result in significant computational overhead when constructing the matrix $D_\theta L^T D_\theta L$. Utilizing a first-order optimizer for the global update generally yields slow convergence for the majority of the PDE considered. On the other hand, in case of Navier-Stokes equation, convergence is not achieved as the predicted pressure oscillates during training and oftentimes diverges.

Furthermore, our initialization procedure yields a probability density function $p_\phi$ that is approximately uniform. In relatively simple or linear scenarios, such as the Helmholtz equation, we observe that the domain decomposition remains mostly unchanged during training, which aligns with theoretical convergence expectations. However, in nonlinear settings, such as the Navier-Stokes equations, the AAS paradigm introduces meaningful deviations from the uniform distribution. These deviations serve to allocate more submodels in regions with higher solution complexity (e.g., steep gradients) and can be observed in the results in Appendix \ref{app:PDEs}. While theoretical convergence to a uniform distribution assumes that the residuals vanish entirely — a reasonable but idealized assumption — this is rarely the case in practice. Consequently, the final distribution does not always converge to uniformity, but this deviation often enhances the model's accuracy. We view this discrepancy between theory and practice as an important and interesting topic that merits deeper exploration in future work.

\subsection{Comparison with existing approaches}
We compare the performance of \textsc{PINN Balls} with the XPINN and the APINN through the results reported in the respective publications. We train \textsc{PINN Balls} of variable ensemble size that maintain a comparable number of parameters -- roughly 5000 each model --. Comparing with existing architectures, the results in Table \ref{tab:Burger} clearly highlight the superiority of our model by a large margin in the Helmholtz equation. Notably, this is achieved with a lower number of parameters with respect to the two models highlighted. Regarding Burgers equation, we largely outperform the FBPINN model as they showcase an error on the order of $10^{-1}$. The results obtained by \textsc{PINN Balls} is comparable to that of the APINN architecture. However, it is important to mention that the models used for comparison directly instantiate a DD that divides the domain in $x=0$, which is where the shock locates in the solution of Burgers' equation, and represents a somewhat optimal division of the domain -- as it captures a discontinuity while maintaining continuous submodel predictions--. Our model, on the other hand, learns the decomposition autonomously. Opposed to the results observed in Figure \ref{fig:Helmholtz}, larger ensembles do not necessarily outperform smaller ones for a fixed parameter count. In practice, it is important that each submodel mantains a reasonable number of parameters to capture the behavior of the target PDE solution in the related subdomain.
\begin{table}
\caption{Comparison of the relative $L^2$ Error achieved by \textsc{PINN Balls}, XPINN and APINN models on Helmholtz and Burgers Equation, for a comparable number of total parameters.}
\label{tab:Burger}
  \centering
  \begin{tabular}{lcc}
    \toprule
    Model & Rel. $L^2$ (Burger) & Rel. $L^2$ (Helm.)  \\
    \midrule
    XPINN & $1.3\text{e}{-1} \pm 7.2\text{e}{-3}$ & $1.2\text{e}{-3} \pm 1.7\text{e}{-4}$ \\
    APINN & $9.1\text{e}{-4} \pm 3.6\text{e}{-4}$ & $1.2\text{e}{-3} \pm 4.7\text{e}{-4}$ \\
    \textsc{PINN Balls} (5) & $\mathbf{2.3\textbf{e}{-4} \pm 3.3\textbf{e}{-4}}$ & $2.8\text{e}{-9} \pm 1.8\text{e}{-9}$ \\
    \textsc{PINN Balls} (20) & $1.7\text{e}{-3} \pm 2.0\text{e}{-3}$ & $\mathbf{1.3\textbf{e}{-9} \pm 1.4\textbf{e}{-9}}$ \\
    \bottomrule
  \end{tabular}
\end{table}

\subsection{Limitations}\label{sec:limitations}
While \textsc{PINN Balls} offer a memory-efficient alternative for second-order training of PINNs, they also present some limitations. Efficiently computing of the Jacobian matrix remains challenging.  While sparsity accelerates matrix operations at larger scales, the overhead of assemblying and managing the sparse Jacobian becomes non-negligible at moderate problem size. Therefore, it is crucial to restrict the application of \textsc{PINN Balls} to cases where the usage of a large number of model parameters is a strict requirement. Furthermore, to the best of our knowledge, existing automatic-differentiation packages do not support backpropagation through sparse matrices. Indeed, the applicability to large industrial use cases assumes the availability of efficient sparse solvers, which although mature, are still generally less optimized for GPU than dense linear algebra routines. 

Another limitation lies in the optimization dynamics. Although the sparse structure of the Jacobian improves memory efficiency, it can lead to slow convergence if the individual experts do not have enough parameters or are poorly initialized. In our experiment, we noticed that the training dynamics of the DD strongly depends on the initialization. While this may be attributed to the imbalanced training -- first order for the DD and second-order for the \textsc{PINN Balls} --, properly adapting the DD to highly complex domains can be crucial in various applications since each PINN Ball is responsible for a narrow portion of the input space. We noticed that the quality of the LM optimizer largely bypasses these issues in the DD training. However, further analysis is required to match the quality of the training step of the DD to the effectiveness of the LM step for the \textsc{PINN Balls}' submodels.

\section{Conclusion}\label{sec:conclusions}
This paper introduced \textsc{PINN Balls}, an efficient, scalable, second-order MoE for solving challenging PDEs in general domains.
We achieve favorable scaling properties in the number of parameters using DD to construct an ensemble of local models. 
These models are effectively trained using second-order methods, while the DD partition is updated employing AAS, a framework for which we establish novel existence and stability theoretical results. 
\textsc{PINN Balls} achieves state-of-the-art performance on challenging PINN benchmarks, showing an improved model accuracy as the number of submodels increases, and effectively addressing the memory requirements of the second-order methods without compromising their efficiency. We show that combining several submodels is an effective strategy to achieve high accuracies when facing model size constraints, either due to hardware or memory limitations. 
Our results hint at a trade-off between the number of submodels and their individual capacity, motivating future research into the dynamical scaling of the number of experts.


Incorporating established PINN enhancements, such as Random Fourier Features \cite{sallam2023rffpinncfd}, Temporal Causality \cite{wang2024respecting}, and Curriculum Training \cite{krishnapriyan2021characterizing}, could further boost the expressiveness and efficiency of the model, particularly given their strong synergy with second-order training methods \cite{bonfanti2024challenges}. Another critical extension involves exploring anisotropic domain decomposition strategies and alternative energy penalization schemes, both of which are crucial for high-fidelity fluid dynamics simulations \cite{sbalzarini2006simulations}. 
Finally, performing the AAS ascent update with respect to the Wasserstein distance may result in smoother updates \cite{jko, geomloss}, improving the convergence of the DD partition scheme in complex PDE scenarios.

\paragraph{Impact Statement}
The model represented enables second-order training of PINN-styled architectures. Moreover, the theoretical backbone ensures a uniform coverage of the PDE domain through a fully learnable domain decomposition. This model represents a step towards enabling PINNs as valuable alternative to traditional PDE solution methods, and their inclusion into industrial applications.

\section*{Acknowledgements} We acknowledge fundings from BMW through the ProMotion programme; funding from IKUR\_IKA\_23/15 project of the Basque Science Foundation,  from the Spanish Ministry of Science, Innovation and Universities  (Projects PID2023-149195NB-I00  and PID2022-137442NB-I00), and the Basque Government (Grant Nos. KK-2023/00012, KK-2024/00030, and IT1504-22). Partial funding by BERC 2022-2025 program and the Spanish State Research Agency through BCAM Severo Ochoa excellence accreditation CEX2021-0011 42-S/MICIN/AEI/10.13039/501100011033, and through the project PID2024-158994OB-C42 funded by MICIU/AEI/10.13039/501100011033 and cofunded by the European Union is also acknowledged. IM was supported by the Emmy Noether program of the DFG, project number 403056140.


\bibliographystyle{abbrv}
\bibliography{References}

\newpage
\section*{NeurIPS Paper Checklist}

\begin{enumerate}

\item {\bf Claims}
    \item[] Question: Do the main claims made in the abstract and introduction accurately reflect the paper's contributions and scope?
    \item[] Answer: \answerYes{} 
    \item[] Justification: Yes, the main claims in the abstract and introduction accurately reflect the paper scope and contributions.
    \item[] Guidelines:
    \begin{itemize}
        \item The answer NA means that the abstract and introduction do not include the claims made in the paper.
        \item The abstract and/or introduction should clearly state the claims made, including the contributions made in the paper and important assumptions and limitations. A No or NA answer to this question will not be perceived well by the reviewers. 
        \item The claims made should match theoretical and experimental results, and reflect how much the results can be expected to generalize to other settings. 
        \item It is fine to include aspirational goals as motivation as long as it is clear that these goals are not attained by the paper. 
    \end{itemize}

\item {\bf Limitations}
    \item[] Question: Does the paper discuss the limitations of the work performed by the authors?
    \item[] Answer: \answerYes{} 
    \item[] Justification: The paper reserves a specific section to discuss limitation of the model proposed.
    \item[] Guidelines:
    \begin{itemize}
        \item The answer NA means that the paper has no limitation while the answer No means that the paper has limitations, but those are not discussed in the paper. 
        \item The authors are encouraged to create a separate "Limitations" section in their paper.
        \item The paper should point out any strong assumptions and how robust the results are to violations of these assumptions (e.g., independence assumptions, noiseless settings, model well-specification, asymptotic approximations only holding locally). The authors should reflect on how these assumptions might be violated in practice and what the implications would be.
        \item The authors should reflect on the scope of the claims made, e.g., if the approach was only tested on a few datasets or with a few runs. In general, empirical results often depend on implicit assumptions, which should be articulated.
        \item The authors should reflect on the factors that influence the performance of the approach. For example, a facial recognition algorithm may perform poorly when image resolution is low or images are taken in low lighting. Or a speech-to-text system might not be used reliably to provide closed captions for online lectures because it fails to handle technical jargon.
        \item The authors should discuss the computational efficiency of the proposed algorithms and how they scale with dataset size.
        \item If applicable, the authors should discuss possible limitations of their approach to address problems of privacy and fairness.
        \item While the authors might fear that complete honesty about limitations might be used by reviewers as grounds for rejection, a worse outcome might be that reviewers discover limitations that aren't acknowledged in the paper. The authors should use their best judgment and recognize that individual actions in favor of transparency play an important role in developing norms that preserve the integrity of the community. Reviewers will be specifically instructed to not penalize honesty concerning limitations.
    \end{itemize}

\item {\bf Theory assumptions and proofs}
    \item[] Question: For each theoretical result, does the paper provide the full set of assumptions and a complete (and correct) proof?
    \item[] Answer: \answerYes{} 
    \item[] Justification: All the theoretical results are accompanied with solid proofs which are included in the appendix, for the sake of brevity, and sketched in the manuscript, in order to provide an intuition to the reader. The assumptions made for each proof are also fully included (at times in the appendix).
    \item[] Guidelines:
    \begin{itemize}
        \item The answer NA means that the paper does not include theoretical results. 
        \item All the theorems, formulas, and proofs in the paper should be numbered and cross-referenced.
        \item All assumptions should be clearly stated or referenced in the statement of any theorems.
        \item The proofs can either appear in the main paper or the supplemental material, but if they appear in the supplemental material, the authors are encouraged to provide a short proof sketch to provide intuition. 
        \item Inversely, any informal proof provided in the core of the paper should be complemented by formal proofs provided in appendix or supplemental material.
        \item Theorems and Lemmas that the proof relies upon should be properly referenced. 
    \end{itemize}

    \item {\bf Experimental result reproducibility}
    \item[] Question: Does the paper fully disclose all the information needed to reproduce the main experimental results of the paper to the extent that it affects the main claims and/or conclusions of the paper (regardless of whether the code and data are provided or not)?
    \item[] Answer: \answerYes{} 
    \item[] Justification: The technical details used for all the experimental results are fully disclosed throughout the paper. Moreover, the paper does often rely on existing algorithms and methods which are properly referenced. Furthermore, the majority of the methods referenced are also available in common Python packages.
    \item[] Guidelines:
    \begin{itemize}
        \item The answer NA means that the paper does not include experiments.
        \item If the paper includes experiments, a No answer to this question will not be perceived well by the reviewers: Making the paper reproducible is important, regardless of whether the code and data are provided or not.
        \item If the contribution is a dataset and/or model, the authors should describe the steps taken to make their results reproducible or verifiable. 
        \item Depending on the contribution, reproducibility can be accomplished in various ways. For example, if the contribution is a novel architecture, describing the architecture fully might suffice, or if the contribution is a specific model and empirical evaluation, it may be necessary to either make it possible for others to replicate the model with the same dataset, or provide access to the model. In general. releasing code and data is often one good way to accomplish this, but reproducibility can also be provided via detailed instructions for how to replicate the results, access to a hosted model (e.g., in the case of a large language model), releasing of a model checkpoint, or other means that are appropriate to the research performed.
        \item While NeurIPS does not require releasing code, the conference does require all submissions to provide some reasonable avenue for reproducibility, which may depend on the nature of the contribution. For example
        \begin{enumerate}
            \item If the contribution is primarily a new algorithm, the paper should make it clear how to reproduce that algorithm.
            \item If the contribution is primarily a new model architecture, the paper should describe the architecture clearly and fully.
            \item If the contribution is a new model (e.g., a large language model), then there should either be a way to access this model for reproducing the results or a way to reproduce the model (e.g., with an open-source dataset or instructions for how to construct the dataset).
            \item We recognize that reproducibility may be tricky in some cases, in which case authors are welcome to describe the particular way they provide for reproducibility. In the case of closed-source models, it may be that access to the model is limited in some way (e.g., to registered users), but it should be possible for other researchers to have some path to reproducing or verifying the results.
        \end{enumerate}
    \end{itemize}

\item {\bf Open access to data and code}
    \item[] Question: Does the paper provide open access to the data and code, with sufficient instructions to faithfully reproduce the main experimental results, as described in supplemental material?
    \item[] Answer: \answerNo{} 
    \item[] Justification: Making the code public is currently in discussion with the partner institutions. Due to legal reasons, it might not be possible to have it released as open source. However, we make the code available per request to the corresponding author.
    \item[] Guidelines:
    \begin{itemize}
        \item The answer NA means that paper does not include experiments requiring code.
        \item Please see the NeurIPS code and data submission guidelines (\url{https://nips.cc/public/guides/CodeSubmissionPolicy}) for more details.
        \item While we encourage the release of code and data, we understand that this might not be possible, so “No” is an acceptable answer. Papers cannot be rejected simply for not including code, unless this is central to the contribution (e.g., for a new open-source benchmark).
        \item The instructions should contain the exact command and environment needed to run to reproduce the results. See the NeurIPS code and data submission guidelines (\url{https://nips.cc/public/guides/CodeSubmissionPolicy}) for more details.
        \item The authors should provide instructions on data access and preparation, including how to access the raw data, preprocessed data, intermediate data, and generated data, etc.
        \item The authors should provide scripts to reproduce all experimental results for the new proposed method and baselines. If only a subset of experiments are reproducible, they should state which ones are omitted from the script and why.
        \item At submission time, to preserve anonymity, the authors should release anonymized versions (if applicable).
        \item Providing as much information as possible in supplemental material (appended to the paper) is recommended, but including URLs to data and code is permitted.
    \end{itemize}

\item {\bf Experimental setting/details}
    \item[] Question: Does the paper specify all the training and test details (e.g., data splits, hyperparameters, how they were chosen, type of optimizer, etc.) necessary to understand the results?
    \item[] Answer: \answerYes{} 
    \item[] Justification: Details of the training and testing proceedures of all the experimental results obtained in the paper are provided in the paper and thoroughly discussed in the appendix.
    \item[] Guidelines:
    \begin{itemize}
        \item The answer NA means that the paper does not include experiments.
        \item The experimental setting should be presented in the core of the paper to a level of detail that is necessary to appreciate the results and make sense of them.
        \item The full details can be provided either with the code, in appendix, or as supplemental material.
    \end{itemize}

\item {\bf Experiment statistical significance}
    \item[] Question: Does the paper report error bars suitably and correctly defined or other appropriate information about the statistical significance of the experiments?
    \item[] Answer: \answerYes{} 
    \item[] Justification: All the tests which include variability (such as initialization of the networks) are obtained for several runs, and are showcased alongside the variability obtained during training.
    \item[] Guidelines:
    \begin{itemize}
        \item The answer NA means that the paper does not include experiments.
        \item The authors should answer "Yes" if the results are accompanied by error bars, confidence intervals, or statistical significance tests, at least for the experiments that support the main claims of the paper.
        \item The factors of variability that the error bars are capturing should be clearly stated (for example, train/test split, initialization, random drawing of some parameter, or overall run with given experimental conditions).
        \item The method for calculating the error bars should be explained (closed form formula, call to a library function, bootstrap, etc.)
        \item The assumptions made should be given (e.g., Normally distributed errors).
        \item It should be clear whether the error bar is the standard deviation or the standard error of the mean.
        \item It is OK to report 1-sigma error bars, but one should state it. The authors should preferably report a 2-sigma error bar than state that they have a 96\% CI, if the hypothesis of Normality of errors is not verified.
        \item For asymmetric distributions, the authors should be careful not to show in tables or figures symmetric error bars that would yield results that are out of range (e.g. negative error rates).
        \item If error bars are reported in tables or plots, The authors should explain in the text how they were calculated and reference the corresponding figures or tables in the text.
    \end{itemize}

\item {\bf Experiments compute resources}
    \item[] Question: For each experiment, does the paper provide sufficient information on the computer resources (type of compute workers, memory, time of execution) needed to reproduce the experiments?
    \item[] Answer: \answerYes{} 
    \item[] Justification: The experimental set up used to obtain the numerical results provided in the paper is fully described.
    \item[] Guidelines:
    \begin{itemize}
        \item The answer NA means that the paper does not include experiments.
        \item The paper should indicate the type of compute workers CPU or GPU, internal cluster, or cloud provider, including relevant memory and storage.
        \item The paper should provide the amount of compute required for each of the individual experimental runs as well as estimate the total compute. 
        \item The paper should disclose whether the full research project required more compute than the experiments reported in the paper (e.g., preliminary or failed experiments that didn't make it into the paper). 
    \end{itemize}
    
\item {\bf Code of ethics}
    \item[] Question: Does the research conducted in the paper conform, in every respect, with the NeurIPS Code of Ethics \url{https://neurips.cc/public/EthicsGuidelines}?
    \item[] Answer: \answerYes{} 
    \item[] Justification: The research conducted in the paper conforms with the NeurIPS Code of Ethics.
    \item[] Guidelines:
    \begin{itemize}
        \item The answer NA means that the authors have not reviewed the NeurIPS Code of Ethics.
        \item If the authors answer No, they should explain the special circumstances that require a deviation from the Code of Ethics.
        \item The authors should make sure to preserve anonymity (e.g., if there is a special consideration due to laws or regulations in their jurisdiction).
    \end{itemize}

\item {\bf Broader impacts}
    \item[] Question: Does the paper discuss both potential positive societal impacts and negative societal impacts of the work performed?
    \item[] Answer: \answerYes{} 
    \item[] Justification: The paper includes an impact statement
    \begin{itemize}
        \item The answer NA means that there is no societal impact of the work performed.
        \item If the authors answer NA or No, they should explain why their work has no societal impact or why the paper does not address societal impact.
        \item Examples of negative societal impacts include potential malicious or unintended uses (e.g., disinformation, generating fake profiles, surveillance), fairness considerations (e.g., deployment of technologies that could make decisions that unfairly impact specific groups), privacy considerations, and security considerations.
        \item The conference expects that many papers will be foundational research and not tied to particular applications, let alone deployments. However, if there is a direct path to any negative applications, the authors should point it out. For example, it is legitimate to point out that an improvement in the quality of generative models could be used to generate deepfakes for disinformation. On the other hand, it is not needed to point out that a generic algorithm for optimizing neural networks could enable people to train models that generate Deepfakes faster.
        \item The authors should consider possible harms that could arise when the technology is being used as intended and functioning correctly, harms that could arise when the technology is being used as intended but gives incorrect results, and harms following from (intentional or unintentional) misuse of the technology.
        \item If there are negative societal impacts, the authors could also discuss possible mitigation strategies (e.g., gated release of models, providing defenses in addition to attacks, mechanisms for monitoring misuse, mechanisms to monitor how a system learns from feedback over time, improving the efficiency and accessibility of ML).
    \end{itemize}
    
\item {\bf Safeguards}
    \item[] Question: Does the paper describe safeguards that have been put in place for responsible release of data or models that have a high risk for misuse (e.g., pretrained language models, image generators, or scraped datasets)?
    \item[] Answer: \answerNA{} 
    \item[] Justification: The methods and models discussed in the paper do not pose such risk.
    \item[] Guidelines:
    \begin{itemize}
        \item The answer NA means that the paper poses no such risks.
        \item Released models that have a high risk for misuse or dual-use should be released with necessary safeguards to allow for controlled use of the model, for example by requiring that users adhere to usage guidelines or restrictions to access the model or implementing safety filters. 
        \item Datasets that have been scraped from the Internet could pose safety risks. The authors should describe how they avoided releasing unsafe images.
        \item We recognize that providing effective safeguards is challenging, and many papers do not require this, but we encourage authors to take this into account and make a best faith effort.
    \end{itemize}

\item {\bf Licenses for existing assets}
    \item[] Question: Are the creators or original owners of assets (e.g., code, data, models), used in the paper, properly credited and are the license and terms of use explicitly mentioned and properly respected?
    \item[] Answer: \answerNA{} 
    \item[] Justification: The paper does not use existing assets.
    \item[] Guidelines:
    \begin{itemize}
        \item The answer NA means that the paper does not use existing assets.
        \item The authors should cite the original paper that produced the code package or dataset.
        \item The authors should state which version of the asset is used and, if possible, include a URL.
        \item The name of the license (e.g., CC-BY 4.0) should be included for each asset.
        \item For scraped data from a particular source (e.g., website), the copyright and terms of service of that source should be provided.
        \item If assets are released, the license, copyright information, and terms of use in the package should be provided. For popular datasets, \url{paperswithcode.com/datasets} has curated licenses for some datasets. Their licensing guide can help determine the license of a dataset.
        \item For existing datasets that are re-packaged, both the original license and the license of the derived asset (if it has changed) should be provided.
        \item If this information is not available online, the authors are encouraged to reach out to the asset's creators.
    \end{itemize}

\item {\bf New assets}
    \item[] Question: Are new assets introduced in the paper well documented and is the documentation provided alongside the assets?
    \item[] Answer: \answerNA{} 
    \item[] Justification: The paper does not introduce any new specific asset.
    \item[] Guidelines:
    \begin{itemize}
        \item The answer NA means that the paper does not release new assets.
        \item Researchers should communicate the details of the dataset/code/model as part of their submissions via structured templates. This includes details about training, license, limitations, etc. 
        \item The paper should discuss whether and how consent was obtained from people whose asset is used.
        \item At submission time, remember to anonymize your assets (if applicable). You can either create an anonymized URL or include an anonymized zip file.
    \end{itemize}

\item {\bf Crowdsourcing and research with human subjects}
    \item[] Question: For crowdsourcing experiments and research with human subjects, does the paper include the full text of instructions given to participants and screenshots, if applicable, as well as details about compensation (if any)? 
    \item[] Answer: \answerNA{} 
    \item[] Justification: The paper does not involve crowdsourcing nor research with human subjects.
    \item[] Guidelines:
    \begin{itemize}
        \item The answer NA means that the paper does not involve crowdsourcing nor research with human subjects.
        \item Including this information in the supplemental material is fine, but if the main contribution of the paper involves human subjects, then as much detail as possible should be included in the main paper. 
        \item According to the NeurIPS Code of Ethics, workers involved in data collection, curation, or other labor should be paid at least the minimum wage in the country of the data collector. 
    \end{itemize}

\item {\bf Institutional review board (IRB) approvals or equivalent for research with human subjects}
    \item[] Question: Does the paper describe potential risks incurred by study participants, whether such risks were disclosed to the subjects, and whether Institutional Review Board (IRB) approvals (or an equivalent approval/review based on the requirements of your country or institution) were obtained?
    \item[] Answer: \answerNA{} 
    \item[] Justification: The paper does not involve crowdsourcing nor research with human subjects.
    \item[] Guidelines:
    \begin{itemize}
        \item The answer NA means that the paper does not involve crowdsourcing nor research with human subjects.
        \item Depending on the country in which research is conducted, IRB approval (or equivalent) may be required for any human subjects research. If you obtained IRB approval, you should clearly state this in the paper. 
        \item We recognize that the procedures for this may vary significantly between institutions and locations, and we expect authors to adhere to the NeurIPS Code of Ethics and the guidelines for their institution. 
        \item For initial submissions, do not include any information that would break anonymity (if applicable), such as the institution conducting the review.
    \end{itemize}

\item {\bf Declaration of LLM usage}
    \item[] Question: Does the paper describe the usage of LLMs if it is an important, original, or non-standard component of the core methods in this research? Note that if the LLM is used only for writing, editing, or formatting purposes and does not impact the core methodology, scientific rigorousness, or originality of the research, declaration is not required.
    \item[] Answer: \answerNA{} 
    \item[] Justification: The core method development in this research does not involve LLMs as any important, original, or non-standard components.
    \item[] Guidelines:
    \begin{itemize}
        \item The answer NA means that the core method development in this research does not involve LLMs as any important, original, or non-standard components.
        \item Please refer to our LLM policy (\url{https://neurips.cc/Conferences/2025/LLM}) for what should or should not be described.
    \end{itemize}

\end{enumerate}

\newpage
\appendix

\section{Extension of the Adversarial Adaptive Scheme}

For simplicity in this Section we will assume without loss of generality that $\Omega \subset \mathbb{R}^{d_{in}}$ is a compact set with volume $|\Omega| = 1$ and a regular boundary. We also define $r(u(x)) := \mathcal{R}u(x) - f(x)$, and will ignore the boundary conditions for simplicity.

\subsection{Gradient penalization}
\label{sec:app-1}

We consider the problem of this paper in an abstract form: let $U$ be a space of functions $\Omega \to \mathbb{R}^{d_{out}}$, and let $V$ be the space of probability densities with weak gradient in $L^2$, i.e. $V = \prob(\Omega) \cap H^1(\Omega)$, where we identify an absolutely continuous probability measure with its density. 
Consider the minimax problem
\begin{equation}\label{eq:minimax_AAS_appendix}
    \min_{u\in U} \max_{p \in V} \mathcal{J}(u, p),\quad
    \text{with } \mathcal{J}(u, p) := \int_\Omega r(u(x))^2 p(x)d x - \beta\int_\Omega |\nabla p(x)|^2 d x.
\end{equation}

\begin{remark}
	\label{remark:comparison-aas}
	Since we want to consider the case where the optimal solution of \eqref{eq:loss} can be approximated by functions in $U$, it is natural to assume that the infimum of the residuals is zero.
	Moreover, so that the integration against the $L^2$ function $p$ in \eqref{eq:minimax_AAS_appendix} is meaningful, we will assume  that this infimum is meant in the $L^2$ sense. This assumption is weaker than that in the original paper \cite{tangad2024AAS}, which assumed $r$ to be a surjection from $U$ to $C_c^\infty(\Omega)$; in the latter case a sequence $\{u_n\}_n$ can be easily constructed whose residuals converge to 0 even uniformly.
\end{remark}

\begin{theorem}
\label{thm:aas-grad}
Assume that  $\inf_{u\in U}\|r^2(u)\|_2 = 0$. Then the optimal value of the minimax problem \eqref{eq:minimax_AAS} is 0. In particular, for any sequence $\{u_n\}_n$ minimizing $\|r^2(u)\|_2$ there exists a corresponding sequence of probability density functions $\{p_n\}_n$, each maximizing $\mathcal{J}(u_n, \cdot)$, such that 
    \begin{equation}
        \label{eq:statement-convergence-lagrangian-grad}
        \lim_{n\to\infty} \mathcal{J}(u_n, p_n) = 0.
    \end{equation}
    Moreover, the probability density functions $\{p_n\}_n$ converge strongly in $H^1(\Omega)$ to the uniform distribution on $\Omega$.
\end{theorem}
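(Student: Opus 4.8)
The plan is to run the classical direct method on the inner maximization for a fixed $u$, extract quantitative a priori estimates from the resulting maximizer, and then feed these estimates into the given minimizing sequence. First I would fix $u\in U$ with $\|r^2(u)\|_2<\infty$ and analyze $\max_{p\in V}\mathcal J(u,\cdot)$. Since $|\Omega|=1$, the uniform density $\tilde p\equiv 1$ lies in $V$ and gives $\mathcal J(u,\tilde p)=\|r(u)\|_2^2\ge 0$ (finite because $L^4(\Omega)\subset L^2(\Omega)$ on the bounded domain), so $\sup_{V}\mathcal J(u,\cdot)\ge 0$. Taking a maximizing sequence $\{p_k\}\subset V$, I would bound $\int r(u)^2 p_k\le \|r^2(u)\|_2\,\|p_k\|_2$ by Hölder and use the Poincaré–Wirtinger inequality $\|p_k-1\|_2\le C_P\|\nabla p_k\|_2$ (the mean of $p_k$ is $1/|\Omega|=1$ by the mass constraint) to turn the defining inequality $\beta\|\nabla p_k\|_2^2=\int r(u)^2 p_k-\mathcal J(u,p_k)\le \|r^2(u)\|_2(C_P\|\nabla p_k\|_2+1)+1$ into a quadratic inequality in $\|\nabla p_k\|_2$ with positive leading coefficient, yielding a uniform $H^1$ bound. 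Then I extract a subsequence converging weakly in $H^1$ and strongly in $L^2$ to some $p^*$; the unit-mass and nonnegativity constraints pass to the limit, so $p^*\in V$; strong $L^2$ convergence handles $\int r(u)^2 p_k\to \int r(u)^2 p^*$ and weak lower semicontinuity of the Dirichlet energy gives $\limsup_k(-\beta\|\nabla p_k\|_2^2)\le -\beta\|\nabla p^*\|_2^2$, hence $\mathcal J(u,p^*)\ge \sup_V\mathcal J(u,\cdot)$ and $p^*$ is a maximizer.

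Next I would extract a quantitative estimate from the maximality of $p^*$: from $\mathcal J(u,p^*)\ge 0$ and the same Hölder/Poincaré bound one gets $\beta\|\nabla p^*\|_2^2\le \|r^2(u)\|_2(C_P\|\nabla p^*\|_2+1)$, so $\|\nabla p^*\|_2\le \tfrac{C_P}{\beta}\|r^2(u)\|_2+\sqrt{\|r^2(u)\|_2/\beta}$, and therefore $0\le \max_V\mathcal J(u,\cdot)=\mathcal J(u,p^*)\le \|r^2(u)\|_2(C_P\|\nabla p^*\|_2+1)$, a quantity that vanishes as $\|r^2(u)\|_2\to 0$. Since the lower bound $\mathcal J(u,\tilde p)\ge 0$ shows $\inf_u\max_V\mathcal J(u,\cdot)\ge 0$, applying the above along the given minimizing sequence $\{u_n\}$ (with maximizers $p_n$, which exist by the first step) yields $\max_V\mathcal J(u_n,\cdot)\to 0$; hence the optimal value of \eqref{eq:minimax_AAS_appendix} is $0$ and \eqref{eq:statement-convergence-lagrangian-grad} holds for any such choice of $p_n$. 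Finally, the gradient estimate gives $\|\nabla p_n\|_2\to 0$, and Poincaré–Wirtinger then gives $\|p_n-1\|_2\le C_P\|\nabla p_n\|_2\to 0$; summing the two squared norms yields $p_n\to\tilde p$ strongly in $H^1(\Omega)$.

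I expect the main obstacle to be the first step — the a priori $H^1$ bound on the maximizing sequence — where the unit-mass constraint must be exploited (via Poincaré–Wirtinger around the fixed mean $1$) to dominate the linear-in-$p$ residual term by the quadratic Dirichlet penalty; without this, the maximization is only coercive in $\|\nabla p\|_2$ and not directly in $\|p\|_{H^1}$. Two further points require care: that $V=\prob(\Omega)\cap H^1(\Omega)$ is sequentially weakly closed in $H^1$, which uses Rellich compactness to pass nonnegativity and unit mass to the limit; and that the hypothesis $\|r^2(u)\|_2<\infty$ (rather than mere $L^1$ integrability of $r(u)^2$) is precisely what makes the pairing $\int r(u)^2 p$ with $p\in L^2$ well defined, as anticipated in Remark \ref{remark:comparison-aas}.
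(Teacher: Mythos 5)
Your proof is correct and follows essentially the same route as the paper's: Poincaré--Wirtinger combined with the unit-mass constraint yields the $H^1$ a priori bound, the direct method gives existence of a maximizer $p^*$ of the inner problem, and a quantitative estimate extracted from maximality sends $\|\nabla p_n\|_2 \to 0$, hence $p_n \to \tilde{p}$ strongly in $H^1$. The one small difference is quantitative: you compare $\mathcal{J}(u,p^*)$ against $0$, which produces a quadratic inequality and the square-root bound $\|\nabla p^*\|_2 \le \tfrac{C_P}{\beta}\|r^2(u)\|_2 + \sqrt{\|r^2(u)\|_2/\beta}$, whereas the paper compares against $\mathcal{J}(u,\tilde{p}) = \int_\Omega r^2(u)\,dx \ge 0$, which cancels the constant term and gives the cleaner linear bound $\beta\|\nabla p^*\|_2 \le C\|r^2(u)\|_2$; both vanish as $\|r^2(u)\|_2 \to 0$, so the conclusion is unaffected.
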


\begin{remark}
	Theorem \ref{thm:aas-grad} improves upon previous results in at least two aspects. First, it shows convergence to solutions of the minimax problem using the $H^1$ regularization, which is much more widespread and practical than the Lipschitz regularization considered in the theory of \cite{tangad2024AAS}.
	Moreover, we show that the sequence of marginally optimal probability distributions $\{p_n\}_n$ converge to the uniform distribution. This is a general fact, not depending on the specific minimizing sequence $\{u_n\}_n$, and rules out a concentration of $p_n$ on regions with a higher residual as the total error converges to zero.
\end{remark}

\begin{proof}[Proof of Theorem \ref{thm:aas-grad}]
	
	To begin with, note that for a given $u\in U$, a natural lower bound for $\max_p\mathcal{J}(u, p)$ is given by choosing the uniform density $\tilde{p}(x) \equiv 1$, this is:
	\begin{equation}
	\label{eq:lower-bound-max-J}
	\max_{p\in V} \mathcal{J}(u, p)
	\ge
    \mathcal{J}(u, \tilde{p})
    =
	\int_\Omega r^2(u_n(x)) \tilde{p}(x) dx - \beta \int_\Omega |\nabla \tilde{p}(x)|^2 dx
	=
	\int_\Omega r^2(u_n(x)) dx.
	\end{equation}
	This shows that \eqref{eq:minimax_AAS_appendix} must be non-negative. Let now $(u_n)_n$ be a minimizing sequence for $\|r^2(u)\|_2$. Fix momentarily a given $n$; we will first show that there exists an optimizer for the maximization problem $\max_{p\in V} \mathcal{J}(u, p)$. Let $p\in V$ be a candidate with a higher or equal score than the uniform density $\tilde{p}$; then, by \eqref{eq:lower-bound-max-J} we must have:
    \begin{align*}
    \int_\Omega r^2(u_n(x)) dx
    &\le
    	\int_\Omega r^2(u_n(x)) {p}(x) dx - \beta \int_\Omega |\nabla {p}(x)|^2 dx
    	\\
         \beta \int_\Omega |\nabla p(x)|^2 dx 
         &\le 
        \int_\Omega r^2(u_n(x)) \left(p(x) - 1\right) dx
        \le
         \|p - 1\|_2 \|r^2(u_n)\|_2
        \\
        &\le 
        C \|\nabla p\|_2 \|r^2(u_n)\|_2,
    \end{align*}
    where $C$ is the constant of Poincaré's inequality \cite{evans-pde} (since $p$ is a probability measure, and $\Omega$ has unit volume, the average value of $p$ is 1; also note that $\Omega$ has a regular boundary by assumption). Dividing by the norm of $\nabla p$ then yields: 
    \begin{equation}
        \beta \|\nabla p\|_2 \le C \|r^2(u_n)\|_2,
    \end{equation}
    and using the Poincaré's inequality once again on the left-hand side results in an estimate for the deviation of $p$ with respect to the uniform density:
    \begin{equation}
        \label{eq:bound-gradient-p}
        \tfrac{1}{C} \|p - 1\|_2
        \le
        \|\nabla p \|_2 \le \tfrac{C}{\beta} \|r^2(u_n)\|_2.
    \end{equation}

    We have shown an $H^1$ bound for any probability density that has a higher score than the uniform density $\tilde{p}$ by $\mathcal{J}(u_n, \cdot)$. Consequently, the superlevel sets of  $\mathcal{J}(u_n, \cdot)$ are compact for the $H^1(\Omega)$ weak convergence, so given a maximizing sequence $\{p_{n}^k\}_k$ there exists a (weak) cluster point $p_n$, which must be a maximizer for  $\mathcal{J}(u_n, \cdot)$ due to the upper-semicontinuity of $\mathcal{J}(u_n, \cdot)$:
    \begin{equation}
        \sup_{p\in V} \mathcal{J}(u_n, p)
        =
        \limsup_{k\to\infty}\mathcal{J}(u_n, p_{n}^k)
        \le 
        \mathcal{J}(u_n, p_{n})
        \le 
        \sup_{p\in V} \mathcal{J}(u_n, p).
    \end{equation}

    Finally, since $p_n$ satisfies \eqref{eq:bound-gradient-p}, it holds:
    \begin{equation}
        \label{eq:bound-gradient-pn}
        \tfrac{1}{C} \|p_n - 1\|_2
        \le
        \|\nabla p_n \|_2 \le \tfrac{C}{\beta} \|r^2(u_n)\|_2.
    \end{equation}

    Since $r^2(u_n)$ converges to zero in $L^2$, we obtain both strong $L^2$ convergence of $p_n $ to the uniform distribution $\tilde{p}\equiv 1/|\Omega|$ and strong $L^2$ convergence of the gradient $\nabla p_n$ to zero as $n\to\infty$. Combining the convergence behavior of $\{r^2(u_n)\}_n$ and $\{p_n\}_n$ yields \eqref{eq:statement-convergence-lagrangian-grad}.
    
\end{proof}

\smallskip

\subsection{Entropy penalization}
\label{sec:app-2}

Although intuitive, the gradient penalization term in \eqref{eq:minimax_AAS_appendix} assumes that the probability density has a gradient, which may exclude some interesting cases.
Besides, there is no mechanism preventing $p$ to reach zero, which could imply that the residuals in some regions may not be accounted for at all. 
Inspired by the field of optimal transport, and in particular Wasserstein gradient flows (see Section \ref{sec:gradient-flow} for a discussion), we propose an alternative penalization term that solves these issues. We consider:
\begin{equation}\label{eq:minimax_AAS_appendix_log}
    \min_{u\in U} \max_{p \in \prob(\Omega)} \mathcal{J}(u, p),
    \qquad 
    \text{with }  \mathcal{J}(u, p) := \int_\Omega r(u(x))^2 p(x)d x - \beta\int_\Omega p(x) \log p(x) d x.
\end{equation}

An interesting observation is that in this case the optimizer of the nested maximization step can be given in closed form, given by Lemma \ref{lemma:optimal-p}. This is an elementary result; we however gather the proof in Section \ref{sec:app-proof-closed-form} for completeness.

\begin{lemma}
	\label{lemma:optimal-p}
    Assume $r(u(x))^2 \in  C(\Omega)$. Then the optimizer of  the problem 
    \begin{equation}
    \label{eq:nested-problem}
        \min_{p \in \prob(\Omega)}  
        \beta\int_\Omega p(x) \log p(x) d x
        - \int_\Omega r(u(x))^2 p(x)d x .
    \end{equation}
    is given by 
    \begin{equation}
    \label{eq:closed-form}
    p^*(x) := C \exp \left( 
        r(u(x))^2 / \beta
    \right),
    \end{equation}
    with $C := (\int_\Omega \exp \left( 
        r(u(x))^2 / \beta
    \right))^{-1}$ a normalization constant ensuring $p^*$ has unit mass.
\end{lemma}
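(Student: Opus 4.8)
The plan is to invoke the Gibbs variational principle, i.e.\ to rewrite the objective in \eqref{eq:nested-problem} as a constant plus the relative entropy of $p$ against the candidate $p^*$, and then to exploit the non-negativity of that entropy. First I would check that $p^*$ from \eqref{eq:closed-form} is a genuine probability density: since $r(u(\cdot))^2 \in C(\Omega)$ and $\Omega$ is compact, the function $\exp(r(u(x))^2/\beta)$ is continuous and bounded above and below by strictly positive constants, so the normalization constant $C$ is well defined and positive, and $p^*$ itself is bounded away from $0$ and $\infty$. I would also note that the functional in \eqref{eq:nested-problem} is well defined in $(-\infty,+\infty]$ for every $p\in\prob(\Omega)$: the map $t\mapsto t\log t$ is bounded below (by $-1/e$), so $\int_\Omega p\log p \ge -|\Omega|/e > -\infty$, while $\int_\Omega r(u)^2 p$ is finite because $r(u)^2$ is bounded on the compact set $\Omega$.

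The crux is the algebraic identity
\[
\beta\int_\Omega p(x)\log p(x)\,dx-\int_\Omega r(u(x))^2 p(x)\,dx
=\beta\int_\Omega p(x)\log\frac{p(x)}{p^*(x)}\,dx+\beta\log C,
\]
obtained by substituting $\log p^*=\log C+r(u)^2/\beta$ into $\beta\int_\Omega p\log p=\beta\int_\Omega p\log(p/p^*)+\beta\int_\Omega p\log p^*$ and using $\int_\Omega p=1$. The first term on the right-hand side is $\beta$ times the Kullback--Leibler divergence of $p$ with respect to $p^*$, which is non-negative by Jensen's (Gibbs') inequality and vanishes exactly when $p=p^*$ almost everywhere. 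Hence $p^*$ is the unique minimizer, with optimal value $\beta\log C$.

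The only genuine bookkeeping, and the mildest possible ``obstacle'', is the case where $\int_\Omega p\log p=+\infty$ --- equivalently, since $p^*$ is bounded, where the relative entropy is $+\infty$: there the identity above holds in $[-\infty,+\infty]$ with both sides equal to $+\infty$, so the inequality ``$\ge\beta\log C$, with equality iff $p=p^*$'' is preserved. Apart from that, everything is elementary, and strict convexity of $t\mapsto t\log t$ is what really underlies the uniqueness. One could alternatively derive the form of $p^*$ by introducing a Lagrange multiplier for the mass constraint and solving the Euler--Lagrange equation $\beta(\log p+1)-r(u)^2=\mathrm{const}$, but that route still needs a convexity argument to conclude optimality, so I would favour the direct relative-entropy computation.
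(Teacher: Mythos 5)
Your proof is correct, but it follows a genuinely different route from the paper's. The paper argues in the classical ``direct method plus Euler--Lagrange'' style: it first establishes \emph{existence} of a minimizer via lower-semicontinuity of the functional and weak* compactness of $\prob(\Omega)$, then shows the minimizer must have a strictly positive density, and then takes first variations $p_\varepsilon = (1+\varepsilon\theta)p^*$ against zero-mean perturbations $\theta$ to deduce that $\beta\log p^*(x) - r^2(u(x))$ is constant, from which \eqref{eq:closed-form} follows. Your argument instead \emph{verifies} the given $p^*$ directly by the Gibbs/relative-entropy identity: the objective equals $\beta\,\mathrm{KL}(p\,\|\,p^*)+\beta\log C$, and non-negativity of the KL divergence (with equality iff $p=p^*$) closes the proof. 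The trade-off: your route is shorter, gives uniqueness for free via strict convexity of $t\mapsto t\log t$, and sidesteps the existence and positivity arguments entirely, since it never needs to know a minimizer exists before exhibiting one. The paper's route is more robust in settings where one cannot guess the closed form, and it surfaces the structural reason for \eqref{eq:closed-form} (stationarity of the Lagrangian). You flag the $\int p\log p=+\infty$ edge case correctly, and the observation that $p^*$ is bounded away from $0$ and $\infty$ (so $p\ll p^*$ for any $p\in\prob(\Omega)$ with a density) is exactly what makes the KL decomposition unconditional; the paper handles the analogous concern via the positivity lemma for $p^*$. Both proofs are sound.
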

Note that an assumption on the regularity of $r^2(u(x))$ is necessary in order to obtain existence of optimizers, though a tighter assumption may in principle be used (e.g. upper lower-semicontinuity and boundedness).

As in the previous Section, we will need to assume certain regularity of the residuals. Since in this case the regularization term only requires $p$ to be a probability distribution (which is a rather weak assumption), we need a stronger setting for $r^2(u)$. In this case, the appropriate assumption is that $\inf_u \|r^2(u)\|_{C(\Omega)} = 0$, this is, along at least some sequence, the residuals are continuous and converge uniformly to zero. Again, this assumption is weaker than the assumptions in \cite{tangad2024AAS}, since the latter implies the existence of some $u^*$ such that $r^2(u^*) \equiv 0$.

\begin{theorem}
\label{thm:aas-entropy}
    Assume that $\inf_{u\in U} \|r^2(u)\|_{C(\Omega)} = 0$.
    Then the optimal value of the minimax problem \eqref{eq:minimax_AAS_appendix_log} is 0.
    In particular, for any sequence $\{u_n\}_n$ minimizing $\|r^2(u)\|_{C(\Omega)}$ there exists a corresponding sequence of probability density functions $\{p_n\}_n$, each maximizing $\mathcal{J}(u_n, \cdot)$, such that
    \begin{equation}
        \label{eq:statement-convergence-lagrangian-entropy}
        \lim_{n\to\infty} \mathcal{J}(u_n, p_n) = 0.
    \end{equation}
    Moreover, the probability density functions $\{p_n\}_n$ are everywhere positive and converge \emph{uniformly} to the uniform distribution on $\Omega$.
\end{theorem}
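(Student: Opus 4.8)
The plan is to reduce everything to the explicit maximizer furnished by Lemma \ref{lemma:optimal-p} and then run all estimates in the uniform norm. First I would fix a minimizing sequence $\{u_n\}_n$ for $\|r^2(u)\|_{C(\Omega)}$ and set $\varepsilon_n := \|r^2(u_n)\|_{C(\Omega)} \to 0$; in particular each $r^2(u_n)$ is continuous, so Lemma \ref{lemma:optimal-p} applies and the (unique) maximizer of $\mathcal{J}(u_n, \cdot)$ is $p_n(x) = C_n \exp(r^2(u_n(x))/\beta)$ with $C_n = \bigl(\int_\Omega \exp(r^2(u_n)/\beta)\bigr)^{-1}$.

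Next I would exploit the nonnegativity $r^2 \ge 0$ together with $|\Omega| = 1$: this forces $1 \le \exp(r^2(u_n(x))/\beta) \le e^{\varepsilon_n/\beta}$ pointwise, hence $e^{-\varepsilon_n/\beta} \le C_n \le 1$ and therefore $e^{-\varepsilon_n/\beta} \le p_n(x) \le e^{\varepsilon_n/\beta}$ for every $x \in \Omega$. Positivity of $p_n$ is then immediate, and $\|p_n - 1\|_{C(\Omega)} \le e^{\varepsilon_n/\beta} - 1 \to 0$ yields uniform convergence to the uniform density on $\Omega$. The same two-sided bound also gives $|\log p_n(x)| \le \varepsilon_n/\beta$ uniformly in $x$.

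I would then plug these estimates into $\mathcal{J}(u_n, p_n) = \int_\Omega r^2(u_n) p_n \, dx - \beta \int_\Omega p_n \log p_n \, dx$: the first term lies between $0$ and $\varepsilon_n \int_\Omega p_n = \varepsilon_n$, and the second is bounded in absolute value by $\beta \|\log p_n\|_{C(\Omega)} \int_\Omega p_n \le \varepsilon_n$, so $\mathcal{J}(u_n, p_n) \to 0$, which is \eqref{eq:statement-convergence-lagrangian-entropy}. To identify the minimax value as exactly $0$, I would note that for every $u$ one has the lower bound $\sup_p \mathcal{J}(u, p) \ge \mathcal{J}(u, \tilde p) = \int_\Omega r^2(u) \, dx \ge 0$, since the uniform density $\tilde p \equiv 1$ has zero entropy (using $|\Omega| = 1$); hence $\inf_u \sup_p \mathcal{J} \ge 0$, and combined with $\mathcal{J}(u_n, p_n) \to 0$ the value is $0$.

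I expect this argument to be essentially routine once Lemma \ref{lemma:optimal-p} is available; the only genuine subtlety — and the reason the hypothesis is stated in $C(\Omega)$ rather than in $L^2$ as in Theorem \ref{thm:aas-grad} — is that the explicit maximizer and the clean pointwise bounds require $r^2(u_n)$ to be continuous, so that $p_n$ is a genuine, strictly positive, bounded density. In contrast to the Dirichlet-energy case, the nonnegativity $r^2 \ge 0$ makes the normalization constant $C_n$ automatically close to $1$, so no separate argument ruling out concentration of $\{p_n\}_n$ is needed and the Poincaré inequality can be dispensed with entirely.
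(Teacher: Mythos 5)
Your proof is correct and follows essentially the same route as the paper's: both invoke Lemma \ref{lemma:optimal-p} for the closed-form maximizer, use the uniform density $\tilde p\equiv 1$ to establish non-negativity of the minimax value, and pass to the limit via uniform bounds driven by $\|r^2(u_n)\|_{C(\Omega)}\to 0$. The only cosmetic difference is that the paper exploits the exact cancellation $r^2(u_n)-\beta\log p_n=-\beta\log C_n$ to evaluate $\mathcal J(u_n,p_n)$ in closed form before bounding $-\log C_n$, whereas you bound the two integrals of $\mathcal J(u_n,p_n)$ separately; both yield the same conclusion.
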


\begin{proof}
	First note that, as in the proof of Theorem \ref{thm:aas-grad}, a natural lower bound for $\max_p\mathcal{J}(u, p)$ is given by choosing the uniform density $\tilde{p}(x) \equiv 1$. 
	Therefore the optimum of \eqref{eq:minimax_AAS_appendix_log} must be non-negative.
	
	Now consider $\{u_n\}_n$ a minimizing sequence for $\|r^2(u)\|_{C(\Omega)}$.
    Under the assumptions of the theorem, by Lemma \ref{lemma:optimal-p} each $u_n$ is associated with the optimal probability distribution given by \eqref{eq:closed-form}, that we denote by $p_n$ (with a corresponding normalization constant $C_n = 1/\int_\Omega \exp (r^2(u_n(x))/\beta)$). Then the minimax problem becomes:
    \begin{align}
    \nonumber
        \max_{p \in \prob(\Omega)}
        \mathcal{J} (u_n, p) 
        &=
        \int_\Omega 
        p_n(x)[r^2(u_n(x)) - \beta \log p_n(x)] dx\\
        &=
        \int_\Omega 
        p_n(x)[r^2(u_n(x)) - \beta \log C_n -r^2(u_n(x))] dx
        =
        - \beta \log C_n,
        \label{eq:minmax-bound-entropy}
    \end{align}
where we used that $p_n$ has unit mass.
Finally let us bound \eqref{eq:minmax-bound-entropy}: noting that $1 \le \exp (r^2(u_n(x))/\beta) \le \exp(\|r^2(u_n)\|_\infty / \beta)$,
\begin{align*}
    1 &= \int_\Omega 1 dx \le \overbrace{\int_\Omega  \exp (r^2(u_n(x))/\beta)}^{1/C_n} dx
    \le 
    \int_\Omega
     \exp(\|r^2(u_n)\|_\infty / \beta)dx
     =
     \exp(\|r^2(u_n)\|_\infty / \beta),
\end{align*}
which after taking the logarithm becomes: 
\begin{equation}
    0 \le - \log C_n  \le\frac{ \|r^2(u_n)\|_\infty}{\beta}.
\end{equation}
Plugging this bound into \eqref{eq:minmax-bound-entropy} and using that $\|r^2(u_n)\|_\infty \to 0$ concludes that the minimax optimal value \eqref{eq:minimax_AAS_appendix_log} is zero.  The associated $C_{n}$ converges to 1, since $r^2(u_n)$ converges uniformly to zero, and therefore $p_{n}$ converges likewise uniformly to the uniform density.
\end{proof}

\subsection{Relation to Wasserstein gradient flows}
\label{sec:gradient-flow}

The theory of gradient flows deals with the generalization of curves of steepest descent of the form
\begin{equation}
    \label{eq:gradient-flow}
    \dot{x} = - \nabla F(x)
\end{equation}
to general metric spaces, where neither the left-hand side nor the right-hand side may be well defined. 
This framework has proven to be extremely fruitful in the field of PDEs, since the founding work of Jordan, Kinderlehrer and Otto \cite{jko,otto-porous-medium} showed that many PDEs of interest can be described as curves of steepest descent with respect to certain functionals the space of probability measures endowed with the Wasserstein distance (see e.g. \cite{Villani-TOT2003,ambrosio2005gradient} for an overview).

To allow for a general formulation in this abstract framework, it is customary to employ the formalism of \textit{minimizing movements}. This is, for a metric space $(M, d)$, a functional $F : M \mapsto \mathbb{R}\cup \{+\infty\}$, an initial condition $x_0\in M$ and a time step $\tau$ one considers the discrete trajectory generated by:
 \begin{equation}
 \label{eq:minimizing-movement}
     x_\tau^0 = x^0, 
     \qquad 
     x_\tau^k \in  \arg \min_{x\in M} \frac{1}{2\tau} d(x, x_\tau^{k-1})^2 + F(x), 
     \quad k = 1, 2, ...
 \end{equation}
When $(M, d)$ is a finite-dimensional Euclidean space, \eqref{eq:minimizing-movement} can be shown to reduce to an implicit Euler scheme approximating \eqref{eq:gradient-flow}.
But the benefit of this general framework stems from its application to functional or measure spaces, as well as its ability to handle non-smooth functionals. Then, under suitable assumptions on $F$, $M$ and $d$, the discrete trajectories can be shown to converge to a limit trajectory, which can then be characterized with tools from the theory of partial differential equations.

In relation to the matter of this paper, we have considered two regularization terms: the Dirichlet energy $G_D(p) = \int_\Omega |\nabla p|^2 dx$ and the entropy $G_E(p) = \int_\Omega p(x) \log p(x) dx$. The gradient flow of the Dirichlet energy with respect to the $L^2(\Omega)$ topology yields the heat equation \cite{jko}. 
The heat equation tends to homogenize a given initial condition, which is a desired property in the minimax scheme we studied.
Interestingly, the heat equation can be also obtained as the gradient flow of the entropy $G_E$, in this case on the space of probability measures $\prob(\Omega)$ endowed with the $L^2$-Wasserstein distance \cite{jko}.
Subsequently, we can expect both functionals to perform similarly in a gradient-descent framework, as long as gradients are computed with respect to the right metric (this is, $L^2$ for $G_D$ and $L^2$-Wasserstein for $G_E$). It remains somewhat surprising that in our numerical experiments the entropy seems to be better behaved even using the importance sampling update rule  \eqref{eq:minimax_AAS_discrete_final}, which has an $L^2$ flavour. We consider this further proof of the robustness of the method, and plan to investigate purely Wasserstein updates for the probability mesaure $p$ in future work.

\subsection{Additional proofs}
\label{sec:app-proof-closed-form}

\begin{proof}[Proof of Lemma \ref{lemma:optimal-p}]
    For simplicity let us define $\nrg(p) := \beta\int_\Omega p(x) \log p(x) - r^2(u(x)) p(x) $.
    A minimizer $p^*$ exists by sheer lower-semicontinuity of $\nrg$ and compactness of $\prob(\Omega)$ with respect to the weak* topology. Moreover,  any minimizer must have a positive density; otherwise the objective value would be infinite, since the uniform density is a feasible candidate with a finite objective. This means that by slight abuse of notation we can use $p^*$ also to refer to its density.
    
    Let $\theta$ be an arbitrary function in $L^\infty(\Omega, p^*)$ satistying $\int_\Omega \theta p^* = 0$, and consider the perturbation of $p^*$ given by $p_\varepsilon := (1 + \varepsilon \theta) p^*$. 
    It follows that for small $\varepsilon$, $p_\varepsilon$ remains in $\prob(\Omega)$. Let us compute how $\nrg(p^*)$ changes under a perturbation:
    \begin{equation*}
        0 \le \frac{\nrg(p_\veps) - \nrg(p^*)}{\veps}
        =
        \beta
        \int_\Omega 
        \frac{p_\veps(x) \log p_\veps(x) - p^*(x) \log p^*(x)}{\veps} d x
        -
        \int_\Omega
         r^2(u(x)) \theta(x) p^*(x) dx,
    \end{equation*}
    which, using the monotonicity of the function $s \mapsto s \log (s)$, by dominated convergence yields in the limit $\veps \to 0$:
    \begin{equation}
        \label{eq:hahn-banach}
        0 \le \int_\Omega \theta(x) p^*(x) [\beta\log p^*(x) - r^2(u(x))].
    \end{equation}
    Since \eqref{eq:hahn-banach} must hold also by replacing $\theta$ by $-\theta$, the inequality turns into an equality. Moreover, since $\theta p^*$ is an arbitrary zero-mean $L^\infty$ function, the expression between brackets must be constant. All in all, parametrizing this constant offset by $\beta \log C$, with $C>0$, yields:
    \begin{equation}
    \label{eq:optimal-p}
        \beta\log p^*(x) - r^2(u(x)) = \beta \log C \Rightarrow 
        p^* = C \exp(r^2(u(x))/\beta).
    \end{equation}
    Thus we identify $C$ as a normalization constant enforcing that $p^*$ is a probability distribution, i.e.
    $$C = \left(\int_\Omega \exp (r^2(u(x))/\beta)\right)^{-1}.$$
    The boundedness of $r^2(u)$ guarantees that $C$ is finite, and that $p^*$ is bounded away from zero, while focusing the attention on the regions with a larger residual. Whenever the residual is uniform, $p^*$ will become uniform as well. 
\end{proof}
\section{Details of the PDEs}
\label{app:PDEs}

\subsection{Helmholtz Equation}
The Helmholtz equation is a diffusive-reactive second-order equation similar to the wave equation. Although linear, the Helmholtz equation represents a challenging PDE instance for PINNs due to the high-frequency components included in its solution. In our paper, we choose to solve the bidimensional formulation, which is highlighted in Equation \eqref{eq:helmholtz} in the domain $\Omega = [0,1]^2$
\begin{equation}
\begin{aligned}
  \Delta u + k_x k_y u& = f, \quad &&x,y\in \Omega \\
  u(x, y) &= 0, \quad &&x, y \in \partial \Omega
\end{aligned}
\label{eq:helmholtz}
\end{equation}
with $f(x,y) = -k_x k_y \sin{(k_x x)}\sin{(k_y y)}$. For our test case we choose $k_x = 4$ and $k_y = 1$, analogous to that presented in \cite{AugmentedPINN} for comparability. Each submodel consists of as little as 2 hidden layers with 10 neurons each. Training is performed with $N_r = 10^4$ collocation points for training the PDE residuals, initially sampled with latin hypercube sampling, and $N_b = 3\cdot10^3$ points for training the boundary and initial condition in $\partial\Omega$.
We train \textsc{PINN Balls} with 2 hidden layers and 10 neurons per layer for 2000 iterations, each iterations consists on a single LM step on $\theta$ and 20 Adam iterations on $p_\phi$. Figure \ref{fig:helmholtz_pred} showcases the result of the best performing model on Helmholtz equation, alongside the learnt DD.
\begin{figure}[h!]
    \centering
    \includegraphics[width=1.0\linewidth]{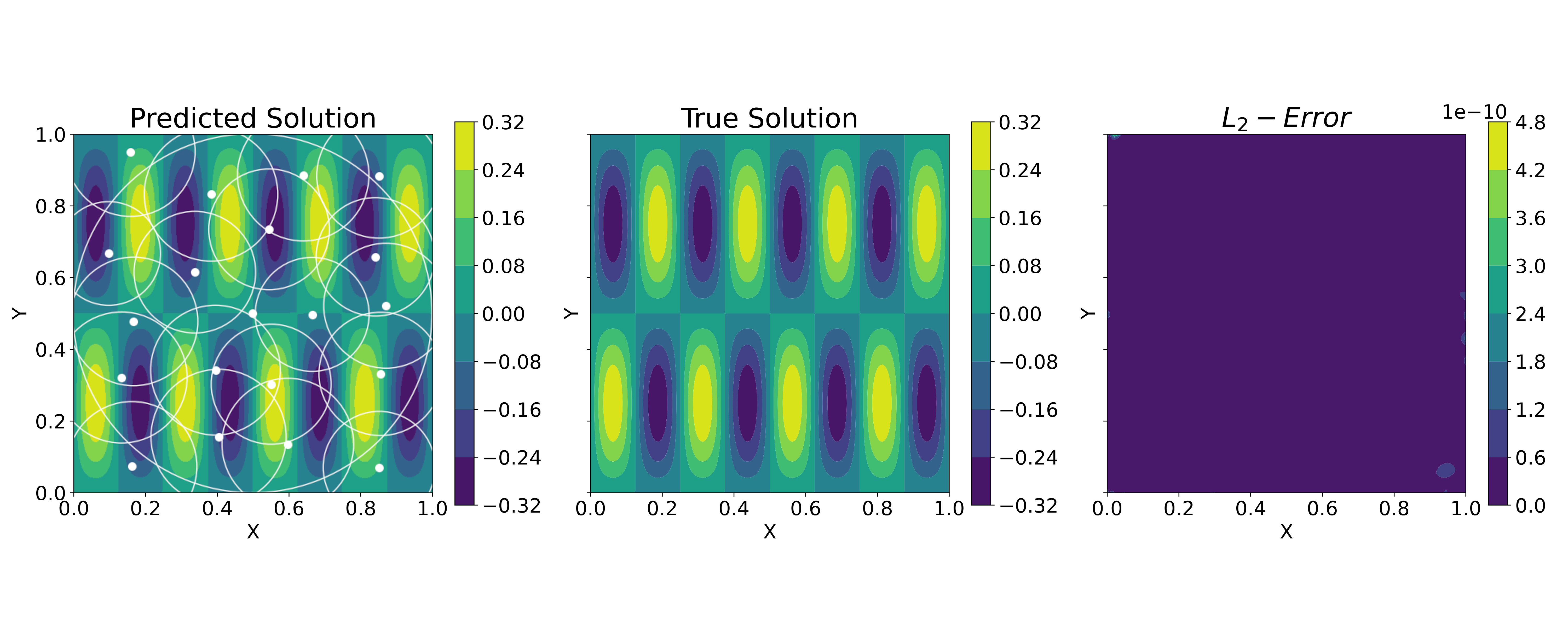}
    \caption{Prediction with learnt center and radii of the PINN Balls (left); correct solution (center); and $L^2$ Error achieved by our best performing model on Helmholtz equation.}
    \label{fig:helmholtz_pred}
\end{figure}

\subsection{Burgers' Equation}
Burgers' equation is a 1D version of Navier-Stokes equations without pressure. Its solution at high times present a discontinuity, which makes it challenging for spectrally biased architectures. The specific instance chosen in our numerics for Burgers' equation is the same as in \cite{raissi2018deep}. In particular, we refer to the exact same data provided by the authors. In particular, given $ (x,\tau) \in \Omega = [-1, 1]\times[0,1]$, we solve for $u:\Omega\to\mathbb{R}$ the following equation:
\begin{equation}
\label{eq:Burgers'_appendix}
\begin{split}
  &\partial_\tau u  + u\partial_x u - \nu\partial^2 _{x} u = 0, \quad (x,\tau)\in\Omega, \\
    &u(x, 0)  = -\sin(\pi x),\quad\quad\quad x\in[-1,1],\\
    &u(-1, \tau)  = u(1,\tau) = 0, \quad \,\,\,\, \tau \in [0,1],
\end{split}
\end{equation}
with the diffusivity $\nu$ being equal to $\frac{0.01}{\pi}$ for this specific instance. The correct solution is provided publicly by the authors of \cite{raissi2018deep}.
Training is performed with $N_r = 10^4$ collocation points for training the PDE residuals, initially sampled with latin hypercube sampling, and $N_b = 3\cdot10^3$ points for training the boundary and initial condition in $\partial\Omega$. We train \textsc{PINN Balls} with 3 hidden layers and 10 neurons per layer for 2000 iterations, each iterations consists on a single LM step on $\theta$ and 20 Adam iterations on $p_\phi$. Figure \ref{fig:burger_pred} showcases the result of the best performing model on Burgers' equation, alongside the learnt DD. 

\begin{figure}[h!]
    \centering
    \includegraphics[width=1.0\linewidth]{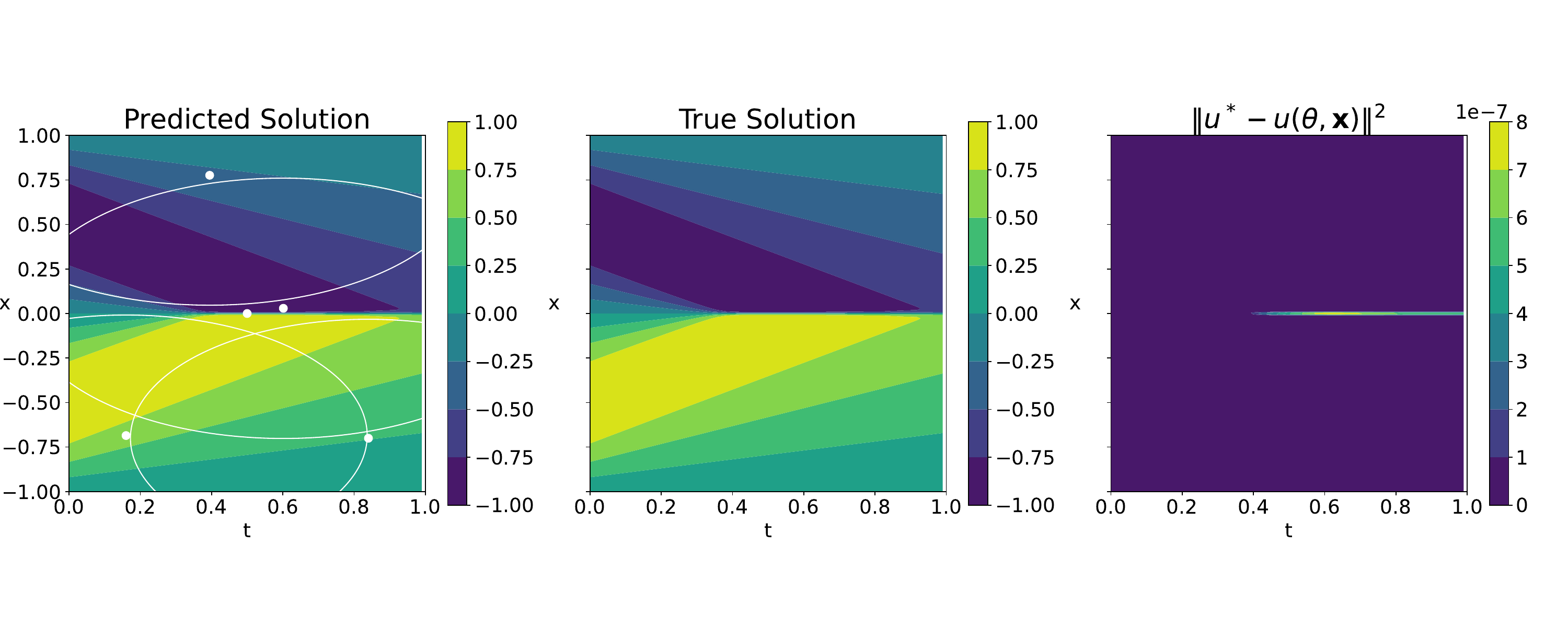}
    \caption{Prediction with learnt center and radii of the PINN Balls (left); correct solution (center); and $L^2$ Error achieved by our best performing model on Burgers' equation.}
    \label{fig:burger_pred}
\end{figure}

\subsection{Navier-Stokes Equation}
The last and most crucial PDE instance considered is given by the incompressible Navier-Stokes equations. In particular, solve the fluid flow past a 2D cilinder presented in \cite{2021NSFnets}. We consider the same instance proposed in the aforementioned paper, given by $(x, y, t) \in \Omega = [-2.5, 7.5]\times[-2.5, 2.5]\times[0, 16]$ with the goal of finding $\Vec{u}:\Omega \to \mathbb{R}^3$, defined as $\Vec{u}(x,y,t) = [u(x,y,t), v(x,y,t), p(x,y,t)]^T$. In this formulation $u$ and $v$ represents respectively the horizontal and vertical components of the fluid velocity; and $p$ the pressure. Navier-Stokes equations are expressed in vectorized form  as follows:
\begin{equation}
\label{eq:NS}
\begin{split}
  &\partial_t u + u\partial_x u + v\partial_y u - \frac{1}{Re}\bigl(\partial^2 _{x} u +\partial^2 _{y} u \bigr) + \partial_x p   = 0, \quad (x, y,t)\in\Omega,  \\
  &\partial_t v  + u\partial_x v + v\partial_y v - \frac{1}{Re}\bigl(\partial^2 _{x} v +\partial^2 _{y} v \bigr) + \partial_y p   = 0, \quad (x, y,t)\in\Omega,  \\
   &\partial_x u  + \partial_y v = 0, \quad \quad \quad \,\,\,\,(x, y,t) 
   \in\Omega,  \\
    &u(x, y, 0)  = g_{u_0} (x, y),\quad (x,y)\in[-2.5,7.5]\times[-2.5, 2.5],\\
    &v(x, y, 0)  = g_{v_0} (x, y),\quad (x,y)\in[-2.5,7.5]\times[-2.5, 2.5],\\
    &u(2.5, y, t) = 1, \quad \quad \,\,\,\,\,\,(y, t) \in [-2.5, 2.5]\times[0,16],\\
    &v(2.5, y, t)  = 0, \quad \quad\,\,\,\,\,\,(y, t) \in [-2.5, 2.5]\times[0,16],
\end{split}
\end{equation}
where $Re$ is the Reynolds' number, which is an adimensional quantity defined by the problem and is set to $100$ for our case. The initial conditions $(g_{u_0}, g_{v_0})$ can be found in the repository of \cite{raissi2018deep}, as well as an high fidelity solution which is used as ground truth. At $x=-2.5$ the fluid velocity at the inlet is imposed. Further conditions are given by the presence of a cylinder centered in $(x,y) = (0,0)$ with radius $0.25$. Furthermore, an additional condition appears at the borders, namely where $y = \pm 2.5$, where the no-slip condition can be chosen ($u=v=0$) or the correct solution can be given as boundary condition. Since the simulation provided in \cite{raissi2018deep} refers to a free-flow stream, we use the correct solution at the boundaries.

To train our PINNs, we use $N_r = 5\cdot 10^5$ collocation points for training the PDE residuals, sampled with latin hypercube sampling, and $N_b = 2\cdot10^4$ points for training the boundary and initial condition in $\partial\Omega$. Morever, at every iteration, we minimize the loss on random batches of the training data, respectively $10^4$ points for the residuals and $5\cdot 10^3$ for boundary and initial condition. Figure \ref{fig:ns_pred} showcases the result of the best performing model on Navier-Stokes' equations, alongside the learnt DD. Notice that we do not normalize the pressure predicted by the model, but we do compute the absolute error by comparing the normalized pressures.

\begin{figure}[h!]
    \centering
    \includegraphics[width=1.0\linewidth]{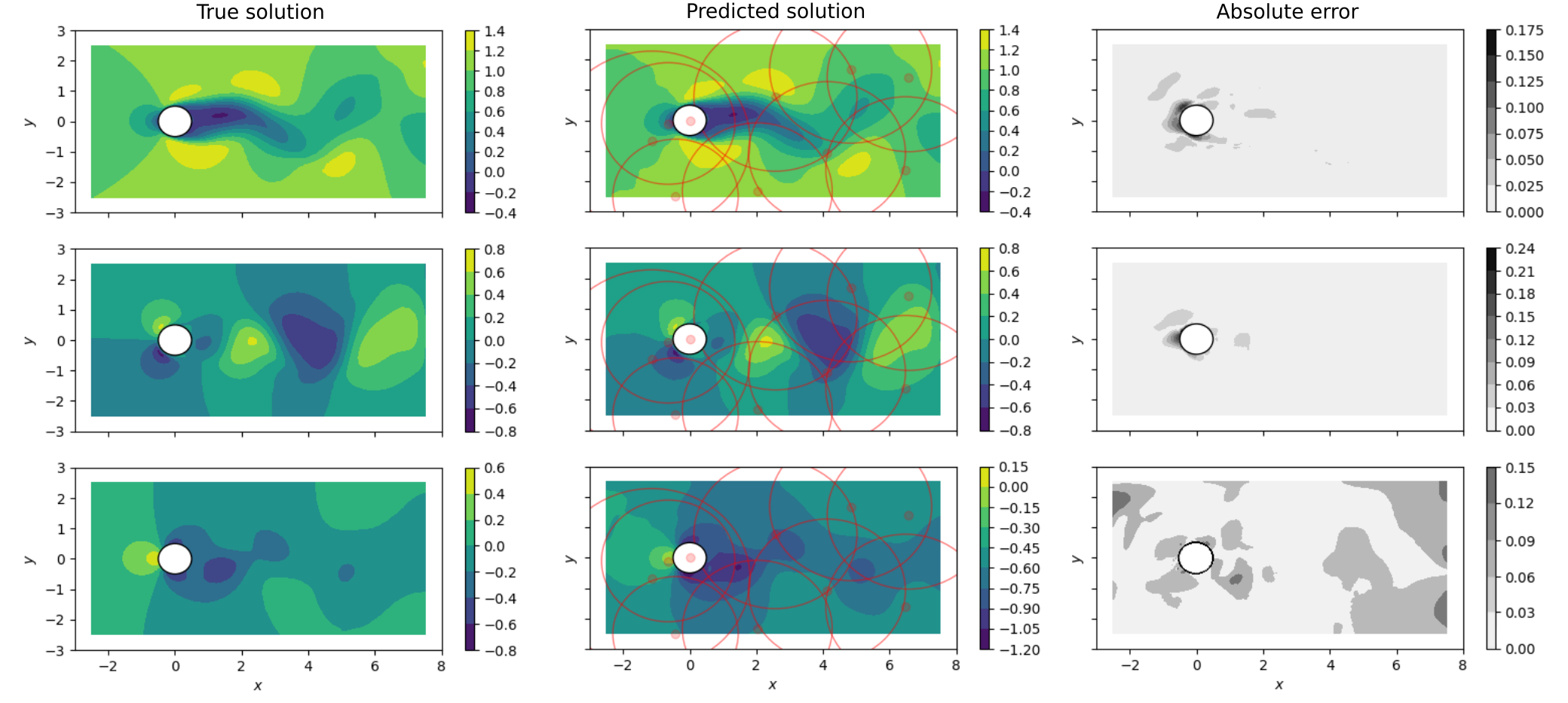}
    \caption{Prediction with learnt center and radii of the PINN Balls (left); correct solution (center); and Absolute Error achieved by our best performing model on Navier Stokes' equations. The top row refers to the horizontal velocity $u$, the central row to the vertical velocity $v$, and the bottom row to the pressure $p$.}
    \label{fig:ns_pred}
\end{figure}


\end{document}